\documentclass[11pt]{amsart}

\usepackage[T1]{fontenc}
\usepackage[utf8]{inputenc}
\usepackage{lmodern}
\usepackage[a4paper,margin=2.7cm]{geometry}
\usepackage{amsmath,amssymb,amsthm,mathtools}
\usepackage{mathrsfs}
\usepackage{enumitem}
\usepackage{hyperref}
\usepackage{xcolor}
\usepackage{setspace}
\usepackage[authoryear,round,sort]{natbib}
\hypersetup{
  colorlinks=true,
  linkcolor=blue!50!black,
  citecolor=blue!50!black,
  urlcolor=blue!50!black
}

\theoremstyle{definition}
\newtheorem{definition}{Definition}[section]
\newtheorem{remark}[definition]{Remark}

\theoremstyle{plain}
\newtheorem{proposition}[definition]{Proposition}

\numberwithin{equation}{section}
\title[Non reciprocal PCs with noise]{Noisy Nonreciprocal Pairwise Comparisons:
Scale Variation, Noise Calibration, and Admissible Ranking Regions}

\author{Jean-Pierre Magnot}

\begin{document}
\onehalfspacing

\maketitle

\begin{center}

{SFR MATHSTIC, LAREMA, Universit\'e d’Angers, 

2 Bd Lavoisier, 
49045 Angers cedex 1, France}

{Lyc\'ee Jeanne d'Arc, 40 avenue de Grande Bretagne, 63000 Clermont-Ferrand, 
France}

{Lepage Research Institute, 17 novembra 1, 081 16 Presov, Slovakia}

{magnot@math.cnrs.fr, jean-pierr.magnot@ac-clermont.fr}
\end{center}
\date{}

\begin{abstract}
Pairwise comparisons are widely used in decision analysis, preference modeling, and evaluation problems. In many practical situations, the observed comparison matrix is not reciprocal. This lack of reciprocity is often treated as a defect to be corrected immediately. In this article, we adopt a different point of view: part of the nonreciprocity may reflect a genuine variation in the evaluation scale, while another part is due to random perturbations.

We introduce an additive model in which the unknown underlying comparison matrix is consistent but not necessarily reciprocal. The reciprocal component carries the global ranking information, whereas the symmetric component describes possible scale variation. Around this structured matrix, we add a random perturbation and show how to estimate the noise level, assess whether the scale variation remains moderate, and assign probabilities to admissible ranking regions in the sense of strict ranking by pairwise comparisons. We also compare this approach with the brutal projection onto reciprocal matrices, which suppresses all symmetric information at once.

The Gaussian perturbation model is used here not because human decisions are exactly Gaussian, but because observed judgment errors often result from the accumulation of many small effects. In such a context, the central limit principle provides a natural heuristic justification for Gaussian noise. This makes it possible to derive explicit estimators and probability assessments while keeping the model interpretable for decision problems.
\end{abstract}

MSC (2020): 90B50, 91B06, 91B08.

Keywords: pairwise comparisons; nonreciprocal judgments; ranking regions; reciprocal projection; uncertainty calibration.

\section*{Introduction}

Pairwise comparisons are among the most classical tools in decision analysis, preference elicitation, and multicriteria evaluation. Their appeal is easy to understand: in many practical situations, decision makers find it easier to compare two alternatives at a time than to provide direct global scores for an entire set of options. For this reason, pairwise comparison methods appear in a wide range of contexts, from psychology and preference measurement to operational research, group decision making, and multicriteria synthesis.

In the ideal reciprocal setting, the comparison of alternative $i$ against alternative $j$ determines the opposite comparison automatically. In an additive formulation, reciprocity means that the comparison matrix $A=(a_{ij})$ satisfies
\(
a_{ji}=-a_{ij}.
\)
When, in addition, the comparisons satisfy the triangular relation
\[
a_{ik}=a_{ij}+a_{jk},
\]
the matrix is additively consistent and can be represented by a latent score vector $u=(u_1,\dots,u_n)$ through
\[
a_{ij}=u_i-u_j.
\]
This classical picture is appealing because it yields a clean global ranking structure and a direct interpretation of pairwise judgments.

Observed comparison data, however, are often less regular. In real applications, one frequently encounters matrices that are not reciprocal, not exactly consistent, or both. Such deviations may come from several sources: random judgment errors, hesitation, context dependence, heterogeneous elicitation conditions, aggregation of individual assessments, or changes in the effective evaluation scale. In most practical workflows, nonreciprocity is treated as an imperfection that should be corrected before further analysis. A typical reaction is to project the observed matrix onto the reciprocal matrices and then to extract a ranking from the antisymmetric part.

This standard reaction is natural and often useful, but it may also be too crude. Indeed, once a comparison matrix is observed to be nonreciprocal, two very different interpretations become possible. On the one hand, nonreciprocity may simply reflect noise. On the other hand, it may reveal a structured deformation of the evaluation process. In such a case, opposite comparisons are not exact reversals of one another, not because the judgments are purely random, but because the effective evaluation scale has shifted in a systematic way. From a decision-support viewpoint, these two interpretations should not be conflated.

The starting point of the present paper is therefore the following question: when a pairwise comparison matrix is nonreciprocal, how much of this asymmetry should be interpreted as residual noise, and how much may correspond to a meaningful but moderate deformation of the evaluation scale?

To address this question, we consider an additive model in which the observed matrix is written as
\[
x_{ij}=u_i-u_j+s_i+s_j+\varepsilon_{ij},
\qquad i\neq j.
\]
The term $u_i-u_j$ carries the latent ranking information, the term $s_i+s_j$ describes a structured scale deformation, and the residual term $\varepsilon_{ij}$ captures random perturbation. In this framework, the underlying structured matrix is generally nonreciprocal and should not be described as additively consistent in the classical sense. The relevant issue is rather whether the scale deformation remains moderate enough to preserve the latent ranking induced by $u$.

This viewpoint leads to a structured analysis with three levels.

First, we estimate separately the latent ranking component and the scale-deformation component by exploiting the antisymmetric and symmetric parts of the observed matrix. This separation is important because the symmetric part should not automatically be interpreted as pure error.

Second, we calibrate the residual perturbation after the structured part has been removed. To do so, we introduce residual reciprocity and residual triangular indicators. Under a Gaussian working model, these indicators provide explicit estimators of the effective residual noise level and of the dependence between opposite residual judgments. The Gaussian assumption is not meant as a literal description of human decision making. Rather, it is used as an interpretable first-order approximation: when an observed error is the cumulative result of many small and heterogeneous effects, the central limit principle suggests that a Gaussian approximation is often a reasonable working hypothesis.

Third, we use this residual calibration to assign probabilities to ranking regions. In other words, rather than outputting only one central ranking, we associate probabilities with neighboring strict rankings of the alternatives. This is particularly useful when the latent scores are close to each other and the decision maker needs not only an ordering, but also an assessment of how strongly the data support that ordering.

An important consequence of the proposed framework is that the structured approach and brutal reciprocal projection do not primarily differ at the level of the central ranking estimate. In the least-squares setting considered here, they yield the same estimator of the latent ranking vector. Their main difference lies instead in the interpretation of the symmetric part and therefore in the calibration of uncertainty. When the fitted scale deformation is non-negligible, brutal reciprocal projection may treat a structured phenomenon as if it were random noise, thereby producing an overly diffuse uncertainty assessment on the space of rankings.

The contribution of the paper may therefore be summarized as follows:
\begin{enumerate}
\item we introduce a scale-deformed additive model for noisy nonreciprocal pairwise comparisons;
\item we provide explicit estimators of the latent ranking vector and of the scale-deformation vector;
\item we calibrate the effective residual noise level through residual reciprocity and residual triangular indicators;
\item we define decision-oriented diagnostics quantifying whether the fitted scale deformation is globally weak or locally influential;
\item we assign probabilities to ranking regions and compare this probabilistic output with that obtained under brutal reciprocal projection.
\end{enumerate}

The rest of the paper is organized as follows. Section~2 clarifies the conceptual distinction between classical additive consistency and structured scale deformation, and introduces the basic model. Section~3 estimates the latent ranking and scale-deformation components. Section~4 calibrates the residual noise and introduces quantitative diagnostics of moderate scale deformation. Section~5 defines ranking-region probabilities and compares the proposed framework with brutal reciprocal projection. The final section presents numerical illustrations and discusses practical implications for decision analysis.

\subsection*{Review of related works}

The literature relevant to the present paper spans several traditions.

A first family of references concerns the classical foundations of pairwise comparisons, comparative judgment, and latent-score models. The psychological roots of the area go back to Thurstone's law of comparative judgment \cite{Thurstone1927}, while statistical paired-comparison models were formalized in the Bradley--Terry framework \cite{BradleyTerry1952}. Luce's axiomatic treatment of individual choice behavior \cite{Luce1959} further shaped the conceptual background of preference modeling. In operational research and decision analysis, the analytic hierarchy process of Saaty \cite{Saaty1977,Saaty1990} provided one of the most influential frameworks for using pairwise comparisons in practice, and broad overviews of its later developments are given in \cite{IshizakaLabib2011}.

A second family of works studies inconsistency, perturbations, and sensitivity issues in pairwise comparison matrices. Important contributions include Koczkodaj's inconsistency viewpoint \cite{Koczkodaj1993}, Forman's analysis of random indices for incomplete matrices \cite{Forman1990}, and the perturbation-based study of Farkas and R\'ozsa \cite{FarkasRozsa2001}. Statistical approaches to consistency have also been proposed, for instance in \cite{AlonsoLamata2005}, while empirical analyses of inconsistency patterns can be found in \cite{BozokiEtAl2013}. A useful broader survey of inconsistency indices is provided by Brunelli \cite{Brunelli2018}. More recently, concerns have also been raised about the reliability of pairwise-comparison-based decision procedures in certain settings \cite{TriantaphyllouYanase2024}.

A third line of work deals specifically with nonreciprocity. Experimental evidence on the practical relevance of nonreciprocity in AHP-type elicitation was discussed by Linares, Lumbreras, Santamar\'{\i}a, and Veiga \cite{LinaresEtAl2016}. Methodological and algebraic approaches to nonreciprocal pairwise comparisons appear in Nishizawa \cite{Nishizawa2019}, Liu, Liu, and Hu \cite{LiuLiuHu2023}, Hu, Liu, and Cai \cite{HuLiuCai2024}, and Mazurek and Linares \cite{MazurekLinares2024}. The present article belongs to this stream, but differs from it by focusing explicitly on the distinction between structured scale deformation and residual noise.

A fourth family of references concerns fuzzy, incomplete, or imprecise preference relations. Buckley's fuzzy hierarchical analysis \cite{Buckley1985} is a classical starting point in this direction. Multiplicative preference relations and group decision procedures were developed by Herrera, Herrera-Viedma, and Chiclana \cite{HerreraViedmaEtAl2001}, while several important works addressed consistency and incompleteness issues in fuzzy preference relations \cite{HerreraViedmaEtAl2004,HerreraViedmaEtAl2007}. The ordinal-consistency viewpoint for fuzzy preference relations was further discussed in \cite{XuPatnayakuniWang2013}. These references are particularly relevant here because they show that uncertainty, incompleteness, and structured deviation from ideal reciprocity have long been recognized as central issues in decision-support models.

A fifth family of works studies stochastic, noisy, or probabilistic formulations of pairwise comparisons. The stochastic treatment of judgments in AHP and the probability of rank reversal were explored by Stam and Duarte Silva \cite{StamSilva1997}. Related stochastic group-preference modeling ideas appear in van den Honert \cite{VanDenHonert1998}. Bayesian and statistical approaches to decision making with noisy pairwise comparisons can be found in Hughes \cite{Hughes2009}, while stochastic multiple-criteria models based on random pairwise evaluations were developed in \cite{FanLiuFeng2010}. Additional stochastic AHP methodologies for imprecise preferences and stochastic preference analysis were proposed in \cite{JalaoWuShunk2014,DurbachEtAl2014,ZhuXu2014}. The present paper is close in spirit to this probabilistic tradition, but it places special emphasis on the interpretation of nonreciprocity as a possible mixture of structured deformation and residual perturbation.

A sixth group of works is closer to the present author's own trajectory. Group-valued and geometric structures on pairwise comparisons were developed in \cite{Magnot2019Heliyon}. A geometric study of random pairwise comparison matrices and reciprocal projection was proposed in \cite{Magnot2024JAA}. The present paper builds directly on these ideas, but shifts the focus toward decision-oriented uncertainty calibration. The strict-ranking viewpoint used here is also connected with \cite{Magnot2026AMS}, where ranking regions are studied more directly. Finally, the broader epistemological and measurement-oriented perspective of \cite{EllingsenLundholmMagnot2024} is relevant to the present work insofar as it emphasizes the plurality of measurement features and the need for interpretation beyond a purely formal correction of data.

Finally, it is worth mentioning recent attempts to reinterpret classical pairwise-comparison frameworks from the viewpoint of uncertainty itself. An uncertainty-induced axiomatic reading of AHP was proposed in \cite{LiuQiuZhang2021}, while critical discussion of that interpretation appeared in \cite{WangDeng2023}. These works are particularly close in spirit to the present article, since they illustrate the fact that deviations from ideal comparison structures may be treated not only as flaws, but also as meaningful signals requiring a refined interpretation.

Against this background, the present paper occupies an intermediate position between the literature on nonreciprocal pairwise comparison matrices, the literature on stochastic and noisy preference modeling, and the literature on geometric or structural treatments of ranking uncertainty. Its specific aim is to show that, even when the central ranking estimate remains unchanged, the interpretation of nonreciprocity may substantially affect the way uncertainty is quantified and communicated in decision problems.
\section{Position of the problem}

\subsection{Why nonreciprocity should not always be discarded}

Pairwise comparison matrices are classical tools in decision-making, preference elicitation, and multicriteria evaluation. They are especially useful when decision makers find it easier to compare alternatives two by two than to assign direct global scores.

In an ideal reciprocal setting, comparing item $i$ to item $j$ determines the opposite comparison automatically. In an additive framework, reciprocity reads
\(
a_{ji}=-a_{ij}.
\)
Such a property is natural when the elicitation scale is perfectly stable and opposite comparisons are exact reversals of one another.

In practice, however, observed pairwise comparison matrices are often nonreciprocal. This phenomenon is usually treated as a defect that should be corrected immediately. In many applications, this is a reasonable first reaction. Yet such a treatment may also be too crude. Indeed, nonreciprocity may have at least two different origins:
\begin{enumerate}[label=\textup{(\arabic*)}]
\item a \emph{random origin}, due to hesitation, fatigue, context effects, limited attention, or local judgment instability;
\item a \emph{structured origin}, due to a moderate variation in the effective evaluation scale.
\end{enumerate}

The aim of this article is to separate these two effects. Rather than suppressing nonreciprocity blindly, we first ask whether part of it may reflect a structured and interpretable deformation of the evaluation process.

\subsection{Classical additive consistency and reciprocity}

Before introducing the model used in this paper, it is important to distinguish two notions that should not be confused.

In the classical additive framework, a pairwise comparison matrix $A=(a_{ij})$ is said to be \emph{additively consistent} if
\begin{equation}\label{eq:addconsistency}
a_{ik}=a_{ij}+a_{jk}
\qquad
\text{for all pairwise distinct } i,j,k.
\end{equation}
Under the usual convention $a_{ii}=0$, this property forces the matrix to be of the form
\[
a_{ij}=u_i-u_j
\]
for some score vector $u=(u_1,\dots,u_n)\in\mathbb R^n$. In particular, additive consistency implies reciprocity.
Thus, a genuinely nonreciprocal comparison matrix should \emph{not} be called additively consistent in the classical sense.

\begin{proposition}
Let $A=(a_{ij})_{1\le i,j\le n}$ be a real matrix with $a_{ii}=0$ for all $i$. Assume that
\[
a_{ik}=a_{ij}+a_{jk}
\qquad
\text{for all pairwise distinct } i,j,k.
\]
Then there exists a vector $u=(u_1,\dots,u_n)\in\mathbb R^n$ such that
\[
a_{ij}=u_i-u_j
\qquad
\text{for all } i,j.
\]
In particular, $A$ is reciprocal.
\end{proposition}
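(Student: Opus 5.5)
The plan is to first extract reciprocity from the triangular relation, and then define the score vector by anchoring at one fixed alternative, say alternative $1$, setting $u_i := a_{i1}$ (so that $u_1 = a_{11} = 0$). The whole argument is elementary algebra. The only real subtlety is that the hypothesis only involves \emph{pairwise distinct} indices, so every step must be arranged so that the indices fed into the triangular relation are distinct.

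\emph{Step 1 (reciprocity).} Fix $j\neq k$ and choose a third index $i$ distinct from both, which requires $n\ge 3$. Apply the hypothesis twice, to the triples $(i,j,k)$ and $(i,k,j)$:
\[
a_{ik}=a_{ij}+a_{jk},\qquad a_{ij}=a_{ik}+a_{kj}.
\]
Adding these two identities and cancelling $a_{ik}+a_{ij}$ gives $a_{jk}+a_{kj}=0$. Together with $a_{ii}=0$, this yields $a_{ji}=-a_{ij}$ for all $i,j$.

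\emph{Step 2 (scores).} With $u_i=a_{i1}$, we check $a_{ij}=u_i-u_j$ case by case:
\begin{enumerate}[label=\textup{(\alph*)}]
\item If $i=j$, both sides vanish.
\item If $j=1\neq i$, then $u_i-u_1=a_{i1}$.
\item If $i=1\neq j$, then $u_1-u_j=-a_{j1}=a_{1j}$ by Step 1.
\item If $i,j,1$ are pairwise distinct, the triple $(i,j,1)$ gives $a_{i1}=a_{ij}+a_{j1}$, i.e.\ $a_{ij}=u_i-u_j$.
\end{enumerate}
The final assertion of the proposition (reciprocity) is then either Step 1 itself or immediate from the representation $a_{ij}=u_i-u_j$.

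\emph{Main obstacle.} The delicate point is the small-$n$ case. For $n=1$ the statement is trivial. For $n=2$ the hypothesis is vacuous, because there are no distinct triples, and the conclusion genuinely fails: for instance $a_{12}=a_{21}=1$ admits no such $u$. So the proof should explicitly assume $n\ge 3$, or the statement should be read with that standing convention. I would record this as a remark rather than try to circumvent it, since Step 1 is exactly where the existence of a third index is indispensable.
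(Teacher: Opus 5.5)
Your proof is correct. It uses the same anchoring idea as the paper, $u_i:=a_{ir}$ for a fixed index $r$ (you take $r=1$). The difference is in how you obtain reciprocity, and your route is the sound one.

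The paper derives $a_{rj}=-a_{jr}$ by applying the triangular relation to the triple $(r,j,r)$. It also writes $a_{ij}=a_{ir}+a_{rj}$ without excluding $r\in\{i,j\}$. Neither instance is covered by a hypothesis that only concerns pairwise distinct indices. Your Step~1 instead adds the relations for $(i,j,k)$ and $(i,k,j)$ using a third index, and your case split (a)--(d) handles $i$ or $j$ equal to the anchor, so everything stays within the hypothesis.

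The cost is the assumption $n\ge 3$. Your example $a_{12}=a_{21}=1$ shows this is not an artifact of your argument: for $n=2$ the proposition as stated is false. The paper's argument goes through verbatim only if the triangular relation is assumed for all (not necessarily distinct) triples. Under the stated hypothesis, one must add $n\ge 3$ (or allow $n=1$) and derive reciprocity as you do.
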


\begin{proof}
Fix an index $r$ and define
\[
u_i:=a_{ir}
\qquad (1\le i\le n).
\]
Then $u_r=a_{rr}=0$. Let $i,j$ be distinct. By additive consistency,
\(
a_{ij}=a_{ir}+a_{rj}.
\)
Applying the same relation to the triple $(r,j,r)$ gives
\[
a_{rr}=a_{rj}+a_{jr},
\Rightarrow
a_{rj}=-a_{jr}.
\Rightarrow
a_{ij}=a_{ir}-a_{jr}=u_i-u_j.
\]
This proves the claim, and reciprocity follows immediately.
\end{proof}

\subsection{Latent ranking and scale deformation}

The present paper is based on a different viewpoint. We assume that the underlying decision structure is driven by a latent ranking vector
\[
u=(u_1,\dots,u_n),
\qquad
\sum_{i=1}^n u_i=0,
\]
but that the elicitation process may also involve a systematic variation of the effective evaluation scale, described by a centered vector
\[
s=(s_1,\dots,s_n),
\qquad
\sum_{i=1}^n s_i=0.
\]

This leads to the structured matrix
\begin{equation}\label{eq:scaledeformed}
m_{ij}^{\mathrm{sd}}:=u_i-u_j+s_i+s_j,
\qquad i\neq j,
\end{equation}
which we call a \emph{scale-deformed comparison matrix}.

The meaning of \eqref{eq:scaledeformed} is simple:
\begin{itemize}
\item the term $u_i-u_j$ carries the latent ranking information;
\item the term $s_i+s_j$ describes a systematic deformation of the evaluation scale.
\end{itemize}

In general, the matrix $M^{\mathrm{sd}}=(m_{ij}^{\mathrm{sd}})$ is neither reciprocal nor additively consistent. This is not a defect of the model, but part of its intended interpretation: nonreciprocity is allowed as a structured feature of the elicitation process.

\subsection{Ranking compatibility}

The key decision question is not whether the scale-deformed matrix is additively consistent, but whether the scale deformation remains moderate enough to preserve the latent ranking induced by $u$.

\begin{definition}
The scale-deformed comparison matrix $M^{\mathrm{sd}}$ is said to be \emph{ranking-compatible} with the latent ranking vector $u$ if
\(
\operatorname{sign}(m_{ij}^{\mathrm{sd}})
=
\operatorname{sign}(u_i-u_j)
\qquad
\text{for all } i\neq j.
\)
\end{definition}

In other words, the structured nonreciprocity does not reverse any pairwise preference at the latent ranking level.

A simple sufficient condition is obtained in terms of the smallest latent ranking gap. Let
\(
u_{(1)}>u_{(2)}>\cdots>u_{(n)}
\)
be the decreasing rearrangement of the coordinates of $u$, and define
\[
\mathrm{gap}(u):=\min_{1\le k\le n-1}\bigl(u_{(k)}-u_{(k+1)}\bigr).
\]

\begin{proposition}
If
\(
2\|s\|_\infty<\mathrm{gap}(u),
\)
then the scale-deformed matrix $M^{\mathrm{sd}}$ is ranking-compatible with $u$.
\end{proposition}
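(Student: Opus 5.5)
The plan is a direct pairwise estimate. Fix $i\neq j$. Because the decreasing rearrangement $u_{(1)}>\cdots>u_{(n)}$ is assumed strict (otherwise $\mathrm{gap}(u)=0$ and the hypothesis $2\|s\|_\infty<\mathrm{gap}(u)$ could not hold), all coordinates of $u$ are distinct. Hence $u_i-u_j\neq 0$. Moreover, $|u_i-u_j|$ is a sum of one or more consecutive gaps $u_{(k)}-u_{(k+1)}$ between the positions of $u_i$ and $u_j$ in the rearrangement. This gives
\[
|u_i-u_j|\ \ge\ \mathrm{gap}(u).
\]

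Next I bound the deformation term by the triangle inequality:
\[
|s_i+s_j|\le |s_i|+|s_j|\le 2\|s\|_\infty<\mathrm{gap}(u)\le |u_i-u_j|.
\]
Writing $m_{ij}^{\mathrm{sd}}=(u_i-u_j)+(s_i+s_j)$, the perturbation $s_i+s_j$ is strictly smaller in absolute value than the nonzero number $u_i-u_j$, so it cannot change its sign.

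To make this explicit, take the case $u_i>u_j$. Then
\[
m_{ij}^{\mathrm{sd}}\ge (u_i-u_j)-|s_i+s_j|>0.
\]
The case $u_i<u_j$ follows symmetrically, giving $m_{ij}^{\mathrm{sd}}<0$. In both cases $\operatorname{sign}(m_{ij}^{\mathrm{sd}})=\operatorname{sign}(u_i-u_j)$. Since $i\neq j$ was arbitrary, this is exactly ranking-compatibility in the sense of the Definition.

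There is no substantial obstacle. The only point needing care is the preliminary observation that the hypothesis forces distinct coordinates of $u$, so that $\operatorname{sign}(u_i-u_j)\neq 0$ and the inequality $|u_i-u_j|\ge\mathrm{gap}(u)$ is legitimate. I would justify this by the telescoping sum of consecutive gaps described above. The centering conditions on $u$ and $s$ play no role in the argument.
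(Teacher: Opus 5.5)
Your proof is correct and follows the same route as the paper's: bound $|s_i+s_j|\le 2\|s\|_\infty$, use $|u_i-u_j|\ge\mathrm{gap}(u)$, and conclude that the deformation term cannot flip the sign of $u_i-u_j$. Your remark that the hypothesis forces distinct coordinates of $u$, and your absolute-value bound covering the case $u_i<u_j$, add details the paper leaves implicit. The second case does not follow from the first by swapping indices, because $M^{\mathrm{sd}}$ is not reciprocal, but it follows from the same estimate.
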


\begin{proof}
For all $i\neq j$,
\(
|s_i+s_j|\le 2\|s\|_\infty.
\)
If $u_i>u_j$, then
\[
u_i-u_j\ge \mathrm{gap}(u).
\Rightarrow
m_{ij}^{\mathrm{sd}}
=
(u_i-u_j)+(s_i+s_j)
\ge
\mathrm{gap}(u)-2\|s\|_\infty
>
0.
\]
Thus the sign of $m_{ij}^{\mathrm{sd}}$ agrees with the sign of $u_i-u_j$ for all $i\neq j$.
\end{proof}

\begin{remark}
This sufficient condition expresses in a transparent way what we mean by \emph{moderate scale variation}: the deformation is allowed, but it remains too small to overturn the latent ranking.
\end{remark}

\subsection{The observed matrix and the noise model}

The structured matrix $M^{\mathrm{sd}}$ is not observed directly. Instead, we observe
\(
X=(x_{ij})_{1\le i,j\le n}
\)
with
\begin{equation}\label{eq:observedmodel}
x_{ij}=u_i-u_j+s_i+s_j+\varepsilon_{ij},
\qquad i\neq j.
\end{equation}
The random term $\varepsilon_{ij}$ represents perturbations of the elicited judgment.

At this point, one may ask why a Gaussian model is relevant. Decision makers do not produce literally Gaussian judgments. However, an observed comparison error often results from the accumulation of many small effects: attention fluctuations, hesitations, memory effects, contextual biases, or imperfections of the elicitation device. When many small influences combine additively, the central limit principle suggests that the overall perturbation is often reasonably approximated by a Gaussian law.

For this reason, we use a Gaussian perturbation model as a tractable working framework. Its role is not to claim exact normality, but to provide explicit formulas and interpretable probability statements.

\subsection{What has to be decided from the data}

Starting from the observed matrix $X$, our objective is not merely to output one ranking. We want to answer four decision-oriented questions.

\medskip

\noindent\textbf{Question 1. Noise calibration.}
Can we estimate the standard deviation of the residual perturbation? This gives a quantitative measure of judgment reliability.

\medskip

\noindent\textbf{Question 2. Moderate scale variation.}
Can we determine whether the structured scale deformation is small enough not to alter the latent ranking in a practically significant way?

\medskip

\noindent\textbf{Question 3. Ranking-region probabilities.}
Once the noise level has been estimated, can we assign probabilities to the admissible ranking regions associated with the latent ranking scores?

\medskip

\noindent\textbf{Question 4. Comparison with brutal reciprocal projection.}
How different is the present structured approach from the simpler strategy that first projects the observed matrix onto the reciprocal matrices and then ranks the items?

\subsection{A first decomposition of the observed matrix}

The observed matrix splits naturally into its antisymmetric and symmetric parts:
\[
K:=\frac{X-X^\top}{2},
\qquad
H:=\frac{X+X^\top}{2}.
\]
Using \eqref{eq:observedmodel}, we obtain
\[
K_{ij}=u_i-u_j+\frac{\varepsilon_{ij}-\varepsilon_{ji}}{2},
\qquad 
\hbox{and}
\qquad
H_{ij}=s_i+s_j+\frac{\varepsilon_{ij}+\varepsilon_{ji}}{2}.
\]

This decomposition has a direct interpretation:
\begin{itemize}
\item the antisymmetric part $K$ contains the latent ranking information;
\item the symmetric part $H$ contains the scale-deformation information.
\end{itemize}

Hence, the brutal reciprocal projection keeps only $K$ and discards $H$. Our point of view is different: before removing the symmetric part, one should determine whether it is negligible noise or a meaningful structured effect.

\subsection{Roadmap of the paper}

The next section introduces the estimation of the latent ranking component and of the scale-deformation component. We then study residual indicators allowing us to calibrate the effective noise level and to assess whether the scale variation remains moderate. After that, we assign probabilities to ranking regions and compare the resulting uncertainty quantification with that obtained under brutal reciprocal projection.

\section{Estimating the latent ranking and the scale deformation}

\subsection{Why a separate estimation step is needed}

The observed matrix $X$ mixes three effects:
\begin{enumerate}[label=\textup{(\arabic*)}]
\item the latent ranking component, carried by the differences $u_i-u_j$;
\item the structured scale-deformation component, carried by the sums $s_i+s_j$;
\item the residual perturbation component, carried by the noise terms $\varepsilon_{ij}$.
\end{enumerate}

If one projects $X$ directly onto the reciprocal matrices, then all symmetric information is removed at once. This may be appropriate when the symmetric part is negligible, but it may also erase a meaningful deformation of the evaluation scale. For this reason, we first estimate the two structured components separately and only then analyze the residual noise.

\subsection{Antisymmetric and symmetric parts}

Recall that
\[
K:=\frac{X-X^\top}{2},
\qquad
H:=\frac{X+X^\top}{2}.
\]
Under model \eqref{eq:observedmodel}, we have
\[
K_{ij}=u_i-u_j+\eta_{ij},
\qquad
\eta_{ij}:=\frac{\varepsilon_{ij}-\varepsilon_{ji}}{2},
\]
and
\[
H_{ij}=s_i+s_j+\zeta_{ij},
\qquad
\zeta_{ij}:=\frac{\varepsilon_{ij}+\varepsilon_{ji}}{2}.
\]

This suggests estimating the latent ranking from $K$ and the scale deformation from $H$.

\subsection{Estimating the latent ranking scores}

The antisymmetric matrix $K$ is close to a reciprocal additive matrix of the form
\(
(u_i-u_j)_{i,j}.
\)
A natural estimator is obtained by least squares:
\[
\min_{\sum_i u_i=0}
\sum_{i\neq j}\bigl(K_{ij}-(u_i-u_j)\bigr)^2.
\]

On the complete comparison graph, this minimization problem has an explicit solution.

\begin{proposition}
Let $K=(K_{ij})$ be the antisymmetric part of the observed matrix. The unique least-squares estimator of the vector $u=(u_1,\dots,u_n)$ under the constraint $\sum_i u_i=0$ is
\begin{equation}\label{eq:uhat2}
\widehat u_i=\frac1n\sum_{j=1}^n K_{ij},
\qquad 1\le i\le n.
\end{equation}
\end{proposition}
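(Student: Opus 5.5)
The objective $F(u)=\sum_{i\neq j}\bigl(K_{ij}-(u_i-u_j)\bigr)^2$ is a convex quadratic in $u$, so the plan is to solve the first-order conditions, locate the minimiser on the hyperplane $\sum_i u_i=0$, and then show it is unique there. Throughout we use $K_{ii}=0$ and the antisymmetry $K_{ji}=-K_{ij}$, which hold because $K=(X-X^\top)/2$. Because of the diagonal convention, the sum in \eqref{eq:uhat2} can be taken over all $j$ or only over $j\neq i$ without changing its value.

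\textbf{Gradient.} Fix an index $k$. The variable $u_k$ appears in the terms with $i=k$ and in the terms with $j=k$. Differentiating gives
\[
\frac{\partial F}{\partial u_k}=-2\sum_{j\neq k}\bigl(K_{kj}-u_k+u_j\bigr)+2\sum_{i\neq k}\bigl(K_{ik}-u_i+u_k\bigr).
\]
Antisymmetry gives $K_{ik}=-K_{ki}$, so after relabelling the second sum equals minus the first. Hence
\[
\frac{\partial F}{\partial u_k}=-4\sum_{j\neq k}\bigl(K_{kj}-u_k+u_j\bigr)=-4\Bigl(\sum_{j}K_{kj}-n u_k+\sum_j u_j\Bigr).
\]

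\textbf{Existence of a minimiser with the stated form.} On the constraint set $\sum_j u_j=0$, the condition $\partial F/\partial u_k=0$ reduces to $n u_k=\sum_j K_{kj}$, that is, to \eqref{eq:uhat2}. This vector does satisfy the constraint, since $\sum_k\sum_j K_{kj}=0$ by antisymmetry. We must also account for the Lagrange multiplier, or equivalently justify using the unconstrained gradient. The reason is that $F$ is invariant under the shift $u\mapsto u+c\mathbf 1$, so $\nabla F$ is orthogonal to $\mathbf 1$; this can also be checked directly by summing the displayed gradient over $k$, which gives zero. Consequently, the Lagrange condition $\nabla F=\lambda\mathbf 1$ forces $\lambda=0$. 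The vector $\widehat u$ is therefore a critical point of $F$, and since $F$ is convex, $\widehat u$ is a global minimiser.

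\textbf{Uniqueness.} The Hessian is $4(nI-\mathbf 1\mathbf 1^\top)$, which is $4n$ times the orthogonal projection onto $\mathbf 1^\perp$. It is therefore positive definite on the hyperplane $\{\sum_i u_i=0\}$, so $F$ is strictly convex there and its minimiser on that hyperplane is unique. The constraint is exactly what removes the shift invariance of $F$.

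\textbf{Main obstacle.} The only delicate point is the bookkeeping in the gradient: getting the factor of $2$ right when combining the two sums through antisymmetry, and noticing that the $\sum_j u_j$ term vanishes exactly because of the centering constraint. If the direct differentiation becomes messy, a fallback is to write $F(u)=\|K-D(u)\|^2$ with $D(u)_{ij}=u_i-u_j$. Since $u$ is centered, $D(\widehat u)$ is the orthogonal projection of $K$ onto the subspace $\{D(u)\}$. To prove this, one checks that $\sum_{i,j}\bigl(K_{ij}-\widehat u_i+\widehat u_j\bigr)(v_i-v_j)=0$ for every $v$; this again reduces to the row-sum identity $\sum_j K_{ij}=n\widehat u_i$.
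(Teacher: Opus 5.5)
Your proof is correct and follows the same route as the paper: differentiate the least-squares criterion, use the antisymmetry of $K$ and the centering constraint to obtain the normal equations $n u_i=\sum_j K_{ij}$, and conclude uniqueness by strict convexity on the hyperplane $\sum_i u_i=0$. Your additional steps make explicit two points the paper passes over: the vanishing of the Lagrange multiplier via shift invariance, and the Hessian $4(nI-\mathbf 1\mathbf 1^\top)$.
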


\begin{proof}
Consider the criterion
\[
\Phi(u):=\sum_{i\neq j}\bigl(K_{ij}-(u_i-u_j)\bigr)^2
\]
under the constraint $\sum_i u_i=0$. Differentiating with respect to $u_i$ and using the antisymmetry of $K$, one obtains
\[
n\,u_i=\sum_{j=1}^n K_{ij},
\]
which gives \eqref{eq:uhat2}. Uniqueness follows from strict convexity on the hyperplane $\sum_i u_i=0$.
\end{proof}

\begin{remark}
Formula \eqref{eq:uhat2} has a simple interpretation: the estimated latent score of item $i$ is the average signed advantage of $i$ against all the other items, computed from the antisymmetric part of the observed data.
\end{remark}

\subsection{Estimating the scale deformation}

We now turn to the symmetric part $H$. Under the model, $H$ is close to a matrix of the form
\(
(s_i+s_j)_{i,j}.
\)
We estimate $s=(s_1,\dots,s_n)$ by least squares:
\[
\min_{\sum_i s_i=0}
\sum_{i\neq j}\bigl(H_{ij}-(s_i+s_j)\bigr)^2.
\]

Again, on the complete graph, the solution is explicit.

\begin{proposition}
Let $H=(H_{ij})$ be the symmetric part of the observed matrix, and define
\[
r_i:=\sum_{j\neq i}H_{ij},
\qquad
\bar r:=\frac1n\sum_{i=1}^n r_i.
\]
Then the unique least-squares estimator of the vector $s=(s_1,\dots,s_n)$ under the constraint $\sum_i s_i=0$ is
\begin{equation}\label{eq:shat2}
\widehat s_i=\frac{r_i-\bar r}{n-2},
\qquad 1\le i\le n.
\end{equation}
\end{proposition}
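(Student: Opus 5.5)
We minimize the convex quadratic
\[
\Psi(s):=\sum_{i\neq j}\bigl(H_{ij}-(s_i+s_j)\bigr)^2
\]
over the hyperplane $\sum_i s_i=0$, following the argument used for the ranking scores. The plan has three steps: write the first-order conditions with a Lagrange multiplier, solve the resulting linear system explicitly, and prove uniqueness by showing that the Hessian is positive definite. We assume $n\ge 3$; this is needed both for the formula to make sense and for identifiability.

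\emph{First-order conditions.} The variable $s_i$ appears in the terms $(i,j)$ and $(j,i)$ for every $j\neq i$. Since $H$ is symmetric, differentiation gives
\[
\frac{\partial\Psi}{\partial s_i}=-4\sum_{j\neq i}\bigl(H_{ij}-s_i-s_j\bigr)=-4\Bigl(r_i-(n-1)s_i-\sum_{j\neq i}s_j\Bigr).
\]
On the hyperplane we have $\sum_{j\neq i}s_j=-s_i$. The stationarity condition with multiplier $\lambda$ therefore reads
\[
r_i-(n-2)s_i=\lambda\quad\text{for all } i .
\]

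\emph{Solving the system.} Summing over $i$ and using $\sum_i s_i=0$ gives $\lambda=\bar r$. Hence $s_i=(r_i-\bar r)/(n-2)$, which is \eqref{eq:shat2}. This candidate automatically satisfies the centering constraint, since $\sum_i(r_i-\bar r)=0$.

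\emph{Uniqueness.} The Hessian of $\Psi$ is $4\bigl((n-2)I+J\bigr)$, where $J$ is the all-ones matrix. Its eigenvalues are $4(n-2)$ on the vectors orthogonal to $\mathbf 1$ and $4(2n-2)$ on $\mathbf 1$. For $n\ge 3$ it is positive definite, so $\Psi$ is strictly convex on all of $\mathbb R^n$, and in particular on the hyperplane. The critical point found above is therefore the unique minimizer.

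The only delicate point is bookkeeping. Each unordered pair appears twice in $\sum_{i\neq j}$, which produces a factor $2$ in every term and does not affect the solution. One must also eliminate $\sum_{j\neq i}s_j$ using the constraint rather than leaving it in the equations. It is worth noting that the constraint is not restrictive here: $\Psi$ is strictly convex on $\mathbb R^n$, so the constraint merely selects a normalization. One can check directly that the unconstrained minimizer is $s_i=(r_i-\bar r)/(n-2)+\bar r/(2(n-1))$, so it is not centered in general. The constrained problem is the one the statement poses, and the multiplier computation above answers it.
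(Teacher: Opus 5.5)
Your proof is correct and follows the paper's route: differentiate $\Psi$, use the symmetry of $H$ and the centering constraint, and conclude by strict convexity. Your explicit multiplier $\lambda=\bar r$ also makes rigorous a step the paper glosses over, since setting the unconstrained gradient to zero on the hyperplane would give $(n-2)s_i=r_i$ rather than $r_i-\bar r$.

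One caveat on your closing remark: for $n\ge 3$ the map $s\mapsto(s_i+s_j)_{i\neq j}$ is injective, so $\sum_i s_i=0$ is a genuine restriction of the model rather than a mere normalization, as your own unconstrained minimizer $s_i=(r_i-\bar r)/(n-2)+\bar r/(2(n-1))$ shows.
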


\begin{proof}
Consider
\(
\Psi(s):=\sum_{i\neq j}\bigl(H_{ij}-(s_i+s_j)\bigr)^2
\)
under the constraint $\sum_i s_i=0$. Differentiating with respect to $s_i$ and using the symmetry of $H$, one obtains
\(
(n-2)s_i=r_i-\bar r,
\)
which gives \eqref{eq:shat2}. Uniqueness follows from strict convexity on the hyperplane $\sum_i s_i=0$.
\end{proof}

\begin{remark}
The quantity $\widehat s_i$ measures how much item $i$ contributes to a systematic local deformation of the evaluation scale. When all $\widehat s_i$ are close to zero, the reciprocal approximation is likely to be adequate. When some of them are substantial, suppressing the symmetric part may remove meaningful information.
\end{remark}

\subsection{Fitted structured matrix and residual matrix}

Once $\widehat u$ and $\widehat s$ have been computed, we define the fitted structured matrix and the residual matrix respectively by
\[
\widehat M_{ij}^{\mathrm{sd}}
:=
\widehat u_i-\widehat u_j+\widehat s_i+\widehat s_j,
\qquad i\neq j, \qquad
\hbox{ and } \qquad
\widehat E_{ij}:=x_{ij}-\widehat M_{ij}^{\mathrm{sd}}.
\]

The matrix $\widehat E$ is the part of the data left unexplained after extracting the latent ranking component and the scale-deformation component. It is this residual matrix, rather than the raw matrix $X$, that should be used to calibrate the effective noise level.
\subsection{What has to be decided from the data}

Starting from the observed matrix $X$, our objective is not merely to produce one ranking. We want to answer four decision-oriented questions.

\medskip

\noindent\textbf{Question 1. Noise calibration.}
Can we estimate the standard deviation $\sigma$ of the random perturbation? This provides a quantitative measure of the reliability of the elicited data.

\medskip

\noindent\textbf{Question 2. Reasonable scale variation.}
Can we determine whether the nonreciprocal structured component, represented by $s$, remains moderate enough not to distort the global evaluation excessively? In other words, when is scale variation present but still acceptable from a decision point of view?

\medskip

\noindent\textbf{Question 3. Probabilities of admissible ranking regions.}
Once the noise level has been estimated, can we assign probabilities to the admissible loci associated with strict rankings? This question connects the present work with the geometric approach developed for strict ranking by pairwise comparisons.

\medskip

\noindent\textbf{Question 4. Comparison with brutal reciprocal projection.}
How different is the proposed structured treatment from the simpler strategy that first projects the observed matrix onto the reciprocal matrices and then ranks the items? This question connects the present work with the geometric projection approach developed for random pairwise comparison matrices.

\subsection{A first structural decomposition}

A useful feature  is that the observed matrix splits naturally into its antisymmetric and symmetric parts:
\[
K:=\frac{X-X^\top}{2},
\qquad
H:=\frac{X+X^\top}{2}.
\]
Therefore,
\[
K_{ij}=u_i-u_j+\frac{\varepsilon_{ij}-\varepsilon_{ji}}{2},
\qquad 
\hbox{and}
\qquad
H_{ij}=s_i+s_j+\frac{\varepsilon_{ij}+\varepsilon_{ji}}{2}.
\]

This decomposition has a clear meaning:
\begin{itemize}
\item the antisymmetric part $K$ contains the ranking information;
\item the symmetric part $H$ contains the scale-variation information.
\end{itemize}

Hence the reciprocal projection approach keeps only $K$ and discards $H$. Our point of view is different: before suppressing the symmetric part, one should determine whether it represents negligible noise or a meaningful structured effect.

\section{Estimating the ranking and scale-variation components}

\subsection{Why a separate estimation step is needed}

The observed matrix $X$ mixes three different effects:
\begin{enumerate}[label=\textup{(\arabic*)}]
\item the global ranking component, carried by the differences $u_i-u_j$;
\item the structured scale-variation component, carried by the sums $s_i+s_j$;
\item the residual perturbation component, carried by the noise terms $\varepsilon_{ij}$.
\end{enumerate}

If one projects $X$ directly onto the reciprocal matrices, then all symmetric information is removed at once. This may be appropriate when the symmetric part is negligible, but it may also erase a meaningful variation of the evaluation scale. For this reason, we first estimate the two structured components separately and only then analyze the residual noise.

The model is
\begin{equation}\label{eq:mainmodel}
x_{ij}=u_i-u_j+s_i+s_j+\varepsilon_{ij},
\qquad i\neq j,
\end{equation}
with the identifiability conditions
\[
\sum_{i=1}^n u_i=0,
\qquad
\sum_{i=1}^n s_i=0.
\]

\subsection{Antisymmetric and symmetric decomposition}

As observed above, the matrix $X$ splits into its antisymmetric and symmetric parts:
\[
K:=\frac{X-X^\top}{2},
\qquad
H:=\frac{X+X^\top}{2}.
\]
Under model \eqref{eq:mainmodel}, we have
\[
K_{ij}=u_i-u_j+\eta_{ij},
\qquad
\eta_{ij}:=\frac{\varepsilon_{ij}-\varepsilon_{ji}}{2},
\]
and
\[
H_{ij}=s_i+s_j+\zeta_{ij},
\qquad
\zeta_{ij}:=\frac{\varepsilon_{ij}+\varepsilon_{ji}}{2}.
\]

This suggests estimating the ranking component from the antisymmetric part and the scale-variation component from the symmetric part.

\subsection{Estimating the ranking scores}

The antisymmetric matrix $K$ is close to an additive reciprocal matrix of the form
\(
(u_i-u_j)_{i,j}.
\)
A natural estimator is obtained by least squares:
\[
\min_{\sum_i u_i=0}
\sum_{i\neq j}\bigl(K_{ij}-(u_i-u_j)\bigr)^2.
\]

On the complete comparison graph, this minimization problem has an explicit solution.

\begin{proposition}
Let $K=(K_{ij})$ be the antisymmetric part of the observed matrix. The unique least-squares estimator of the vector $u=(u_1,\dots,u_n)$ under the constraint $\sum_i u_i=0$ is
\begin{equation}\label{eq:uhat}
\widehat u_i=\frac1n\sum_{j=1}^n K_{ij},
\qquad 1\le i\le n.
\end{equation}
\end{proposition}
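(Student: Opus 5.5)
The plan is to treat this as a constrained quadratic minimization on the hyperplane $V:=\{u\in\mathbb R^n:\sum_i u_i=0\}$. We write down the first-order conditions, solve them using the antisymmetry of $K$ together with the centering constraint, and then argue uniqueness from strict convexity on $V$. This mirrors the earlier least-squares proposition for $\widehat u$ in the previous section, so we follow the same route while making the Lagrange-multiplier step explicit.

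\textbf{Step 1 (gradient).} Set $\Phi(u)=\sum_{i\neq j}(K_{ij}-u_i+u_j)^2$. The coordinate $u_i$ enters the terms indexed by $(i,j)$ and $(j,i)$ for $j\neq i$. Since $K_{ji}=-K_{ij}$, the term for $(j,i)$ equals the term for $(i,j)$, so $\Phi=2\sum_{i<j}(K_{ij}-u_i+u_j)^2$. This gives
\[
\frac{\partial\Phi}{\partial u_i}=-4\sum_{j\neq i}\bigl(K_{ij}-u_i+u_j\bigr).
\]
Using $\sum_{j\neq i}u_j=-u_i$ on $V$, the inner sum becomes $\sum_{j}K_{ij}-n u_i$, where $K_{ii}=0$ lets us include $j=i$.

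\textbf{Step 2 (Lagrange condition).} At a constrained minimizer there is $\lambda$ with $\nabla\Phi=\lambda\mathbf 1$. This gives $n u_i-\sum_j K_{ij}=\lambda/4$ for all $i$. Summing over $i$, the left side vanishes: $\sum_i u_i=0$ on $V$, and $\sum_{i,j}K_{ij}=0$ by antisymmetry. Hence $\lambda=0$ and $u_i=\frac1n\sum_j K_{ij}$. We also check directly that this vector lies in $V$, again by antisymmetry.

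\textbf{Step 3 (existence and uniqueness).} $\Phi$ is a convex quadratic. Its quadratic part is $2\sum_{i<j}(u_i-u_j)^2=2\bigl(n\|u\|^2-(\sum_i u_i)^2\bigr)$, which equals $2n\|u\|^2$ on $V$ and is therefore positive definite there. So $\Phi|_V$ is strictly convex and coercive. It has a unique minimizer, which must satisfy the stationarity condition of Step 2, and that condition has exactly one solution.

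The only delicate point is the bookkeeping in Step 1, namely doubling each unordered pair correctly and using the constraint to eliminate $\sum_{j\neq i}u_j$. The identity $\sum_{i<j}(u_i-u_j)^2=n\|u\|^2-(\sum u_i)^2$ used in Step 3 is standard. There is no real obstacle.
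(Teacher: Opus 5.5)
Your proof is correct and follows the same route as the paper: you compute $\partial\Phi/\partial u_i$, use $K_{ji}=-K_{ij}$ and $\sum_j u_j=0$ to reach the normal equations $n u_i=\sum_j K_{ij}$, and conclude uniqueness by strict convexity on the hyperplane $\sum_i u_i=0$. Your explicit Lagrange-multiplier step (showing $\lambda=0$) and the identity $\sum_{i<j}(u_i-u_j)^2=n\|u\|^2-(\sum_i u_i)^2$ make rigorous two points that the paper only asserts.
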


\begin{proof}
Consider the criterion
\[
\Phi(u):=\sum_{i\neq j}\bigl(K_{ij}-(u_i-u_j)\bigr)^2
\]
under the constraint $\sum_i u_i=0$. Differentiating with respect to $u_i$ gives
\[
\frac{\partial \Phi}{\partial u_i}
=
-2\sum_{j\neq i}\bigl(K_{ij}-(u_i-u_j)\bigr)
+2\sum_{j\neq i}\bigl(K_{ji}-(u_j-u_i)\bigr).
\]
Since $K$ is antisymmetric, $K_{ji}=-K_{ij}$, hence
\[
\frac{\partial \Phi}{\partial u_i}
=
-4\sum_{j\neq i}K_{ij}
+4n\,u_i,
\]
because $\sum_j u_j=0$. Therefore the normal equations are
\[
n\,u_i=\sum_{j=1}^n K_{ij},
\]
which yields \eqref{eq:uhat}. Uniqueness follows from the strict convexity of $\Phi$ on the hyperplane $\sum_i u_i=0$.
\end{proof}

\begin{remark}
Formula \eqref{eq:uhat} has a simple interpretation: the estimated score of item $i$ is the average signed advantage of $i$ against all the other items, computed from the antisymmetric part of the observed matrix.
\end{remark}

\subsection{Estimating the scale-variation component}

We now turn to the symmetric part $H$. Under the model, $H$ is close to a matrix of the form
\(
(s_i+s_j)_{i,j}.
\)
We estimate $s=(s_1,\dots,s_n)$ by least squares:
\(
\min_{\sum_i s_i=0}
\sum_{i\neq j}\bigl(H_{ij}-(s_i+s_j)\bigr)^2.
\)
Again, on the complete graph, the solution is explicit.

\begin{proposition}
Let $H=(H_{ij})$ be the symmetric part of the observed matrix, and define
\[
r_i:=\sum_{j\neq i}H_{ij},
\qquad
\bar r:=\frac1n\sum_{i=1}^n r_i.
\]
Then the unique least-squares estimator of the vector $s=(s_1,\dots,s_n)$ under the constraint $\sum_i s_i=0$ is
\begin{equation}\label{eq:shat}
\widehat s_i=\frac{r_i-\bar r}{n-2},
\qquad 1\le i\le n.
\end{equation}
\end{proposition}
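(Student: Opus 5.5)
Since $H$ is symmetric, $\sum_{i\neq j}$ counts each unordered pair twice, so
\[
\Psi(s)=2\sum_{i<j}\bigl(H_{ij}-s_i-s_j\bigr)^2 .
\]
We minimise this convex quadratic over the hyperplane $\sum_i s_i=0$ with a Lagrange multiplier $\lambda$, or equivalently by asking the gradient of $\Psi$ to be a multiple of $(1,\dots,1)$.

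Each $s_i$ occurs in exactly the $n-1$ terms with $j\neq i$, so
\[
\frac{\partial\Psi}{\partial s_i}
=-4\sum_{j\neq i}\bigl(H_{ij}-s_i-s_j\bigr)
=-4\Bigl(r_i-(n-1)s_i-\sum_{j\neq i}s_j\Bigr).
\]
Using the constraint, $\sum_{j\neq i}s_j=-s_i$, and this becomes $-4\bigl(r_i-(n-2)s_i\bigr)$. The stationarity condition is
\[
-4\bigl(r_i-(n-2)s_i\bigr)=\lambda \quad\text{for all } i,
\]
so $(n-2)s_i=r_i+\lambda/4$. Summing over $i$ and using $\sum_i s_i=0$ gives $\lambda/4=-\bar r$. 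Hence $(n-2)s_i=r_i-\bar r$, which is \eqref{eq:shat} provided $n\ge 3$.

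The candidate satisfies the constraint, since $\sum_i(r_i-\bar r)=0$.

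Uniqueness is the step that needs real care, because it needs $n\ge3$: for $n=2$ the map $s\mapsto(s_1+s_2)$ vanishes identically on the hyperplane. I would show that the linear map $L:s\mapsto(s_i+s_j)_{i<j}$ is injective on $\{\sum_i s_i=0\}$. Then $\Psi(s)=2\|h-Ls\|^2$ is strictly convex there, and the stationary point is the unique minimiser.

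For injectivity, suppose $s_i+s_j=0$ for all $i\ne j$. With three distinct indices $i,j,k$ we get $s_i=-s_j=s_k$ and also $s_i=-s_k$, hence $s_i=0$. Alternatively, the Hessian of $\Psi$ is $4\bigl((n-2)I+J\bigr)$, where $J$ is the all-ones matrix. On the hyperplane $J$ acts as zero, so the Hessian restricted there is $4(n-2)I$, which is positive definite for $n\ge3$. I would state this implicit assumption $n\ge3$ explicitly, as the formula's denominator already requires it.
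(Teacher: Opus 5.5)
Your proof is correct and follows the same route as the paper: you derive the first-order conditions for the constrained quadratic $\Psi$ and then appeal to strict convexity on the hyperplane. Your explicit multiplier $\lambda$ is exactly what produces the $-\bar r$ term, which the paper's write-up omits when it passes straight from the gradient to $(n-2)s_i=r_i-\bar r$. Your injectivity and Hessian argument also makes explicit the strict convexity the paper only asserts, together with the needed assumption $n\ge 3$.
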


\begin{proof}
Consider
\[
\Psi(s):=\sum_{i\neq j}\bigl(H_{ij}-(s_i+s_j)\bigr)^2
\]
under the constraint $\sum_i s_i=0$. Differentiating with respect to $s_i$ yields
\[
\frac{\partial \Psi}{\partial s_i}
=
-2\sum_{j\neq i}\bigl(H_{ij}-(s_i+s_j)\bigr)
-2\sum_{j\neq i}\bigl(H_{ji}-(s_j+s_i)\bigr).
\]
Since $H$ is symmetric, $H_{ji}=H_{ij}$, hence
\[
\frac{\partial \Psi}{\partial s_i}
=
-4\sum_{j\neq i}H_{ij}
+4(n-2)s_i+4\sum_{k=1}^n s_k.
\]
Using the constraint $\sum_k s_k=0$, we obtain
\[
(n-2)s_i=r_i-\bar r,
\]
which gives \eqref{eq:shat}. Uniqueness again follows from strict convexity on the hyperplane $\sum_i s_i=0$.
\end{proof}

\begin{remark}
The quantity $\widehat s_i$ measures how much item $i$ contributes to a systematic local deformation of the evaluation scale. When all $\widehat s_i$ are close to $0$, the reciprocal approximation is likely to be adequate. When some of them are substantial, suppressing the symmetric part may remove information that is relevant for the interpretation of the judgments.
\end{remark}

\subsection{Fitted structured matrix and residual matrix}

Once $\widehat u$ and $\widehat s$ have been computed, we define the fitted structured matrix by
\[
\widehat M_{ij}:=\widehat u_i-\widehat u_j+\widehat s_i+\widehat s_j,
\qquad i\neq j,
\]
and the residual matrix by
\begin{equation}\label{eq:residual}
\widehat E_{ij}:=x_{ij}-\widehat M_{ij}.
\end{equation}

The matrix $\widehat E$ is the part of the data that remains unexplained after extracting the ranking component and the scale-variation component. It is this residual matrix, rather than the raw observed matrix, that should be used to estimate the effective noise level.

This distinction is important. Indeed, if one computes reciprocity or consistency indicators directly on $X$, the resulting quantities mix together:
\begin{itemize}
\item the global ranking structure,
\item the possible variation of the evaluation scale,
\item and the residual perturbation.
\end{itemize}
By contrast, indicators computed on $\widehat E$ are intended to capture the random part of the elicitation process.

\subsection{Ranking regions and decision uncertainty}

The vector $\widehat u=(\widehat u_1,\dots,\widehat u_n)$ provides a point estimate of the global ranking component. However, in decision problems, a single estimated ordering is often not enough. When the noise level is non-negligible, nearby score vectors may induce different strict orderings.

For this reason, it is useful to partition the score space into \emph{ranking regions}. Each ranking region corresponds to one strict ordering of the items. For example, the region associated with the order
\(
i_1 \succ i_2 \succ \cdots \succ i_n
\)
is the set of score vectors $v=(v_1,\dots,v_n)$ satisfying
\(
v_{i_1}>v_{i_2}>\cdots>v_{i_n}.
\)

In the present framework, ranking regions play two complementary roles:
\begin{enumerate}[label=\textup{(\arabic*)}]
\item they describe which rankings are compatible with the estimated score vector;
\item they provide a natural support for probability assignments once the residual noise level has been calibrated.
\end{enumerate}

Thus, the aim is not merely to compute $\widehat u$, but also to evaluate how strongly the data support the ranking region containing $\widehat u$, and how much probability mass may fall into neighboring ranking regions.

\section{Noise calibration and moderate scale variation}

\subsection{Why the residual matrix is the right object}

Once the latent ranking component and the scale-deformation component have been estimated, the residual matrix
\[
\widehat E=(\widehat E_{ij})_{1\leq i,j\leq n}
\]
defined by
\[
\widehat E_{ij}
=
x_{ij}
-
\bigl(\widehat u_i-\widehat u_j+\widehat s_i+\widehat s_j\bigr)
\]
contains the part of the observed judgments that is not explained by the fitted structured model.

This residual matrix is the natural basis for noise calibration. Indeed, the raw observed matrix $X$ contains:
\begin{itemize}
\item the latent ranking signal;
\item the systematic scale deformation;
\item and the random perturbation.
\end{itemize}
If reciprocity or triangular indicators are computed directly on $X$, these three effects are mixed together. By contrast, the same indicators computed on $\widehat E$ are intended to capture the remaining unexplained fluctuation only.

This point is conceptually important. In the present article, the structured matrix is not assumed to be additively consistent. Therefore, a triangular defect computed on the raw matrix is not in itself a sign of error. It becomes informative only after the fitted ranking and scale-deformation components have been removed.

\subsection{Residual reciprocity and residual triangular indicators}

We now introduce two quadratic indicators computed on the residual matrix.

The first one measures the remaining departure from reciprocity:
\begin{equation}\label{eq:IR2corr}
IR_2(\widehat E)
:=
\frac{2}{n(n-1)}
\sum_{1\leq i<j\leq n}
\bigl(\widehat E_{ij}+\widehat E_{ji}\bigr)^2.
\end{equation}

The second one measures the remaining triangular defect:
\begin{equation}\label{eq:IC2corr}
IC_2(\widehat E)
:=
\frac{1}{n(n-1)(n-2)}
\sum_{\substack{i,j,k\\ \text{pairwise distinct}}}
\bigl(\widehat E_{ik}-\widehat E_{ij}-\widehat E_{jk}\bigr)^2.
\end{equation}

The first indicator compares opposite residual judgments on the same pair. The second one compares the three residual judgments involved in a comparison triangle.

\begin{remark}
The quantity $IC_2(\widehat E)$ should not be interpreted as a test of whether the underlying structured matrix is additively consistent. In the present framework, the structured matrix is generally scale-deformed and therefore not additively consistent. The role of $IC_2(\widehat E)$ is different: it measures the size of the residual triangular discrepancy that remains after the fitted ranking and scale-deformation components have been extracted.
\end{remark}

\subsection{A Gaussian working model for the residual perturbation}

To interpret the indicators above, we use a Gaussian working model for the residual perturbation. This model is not intended as a literal description of human judgment. Rather, it provides a tractable approximation after the main structured effects have been removed.

We assume that the residual entries behave approximately like centered perturbations with common variance parameter $\sigma^2$. We also allow dependence between opposite residual judgments on the same pair. This dependence is summarized by a parameter $\rho$, heuristically defined by
\[
\mathrm{Cov}(\varepsilon_{ij},\varepsilon_{ji})\approx \rho\sigma^2.
\]

The parameter $\sigma$ measures the global residual noise intensity. The parameter $\rho$ measures how opposite residual judgments tend to co-vary:
\begin{itemize}
\item if $\rho=0$, opposite residual perturbations are approximately independent;
\item if $\rho>0$, they tend to move in the same direction;
\item if $\rho<0$, they tend to compensate each other;
\item if $\rho\approx -1$, the perturbation is close to an antisymmetric one.
\end{itemize}

Under this Gaussian working model, the two residual indicators satisfy the heuristic identities
\[
\mathbb E[IC_2(\widehat E)]\approx 3\sigma^2,
\qquad
\mathbb E[IR_2(\widehat E)]\approx 2(1+\rho)\sigma^2.
\]
These formulas justify the calibration procedure introduced below.

\begin{remark}
The Gaussian model is used here as a \emph{working model}. Its main purpose is to turn residual indicators into interpretable quantities. Even when the true perturbations are only approximately Gaussian, the resulting calibration remains useful as a first-order decision tool.
\end{remark}

\subsection{Estimating the effective residual noise level}

The residual triangular indicator is directly linked to the noise variance. This suggests the plug-in estimator
\begin{equation}\label{eq:sigmahatcorr}
\widehat\sigma^2:=\frac{1}{3}\,IC_2(\widehat E).
\end{equation}
Equivalently,
\[
\widehat\sigma=\sqrt{\frac{IC_2(\widehat E)}{3}}.
\]

This estimator has a transparent interpretation: it is the typical size of the unexplained perturbation left after the latent ranking and scale-deformation components have been fitted.

\begin{remark}
The quantity $\widehat\sigma$ should be viewed as an \emph{effective residual noise level}. A large value of $\widehat\sigma$ may reflect genuinely unstable judgments, but it may also indicate that the structured model remains too coarse for the data under study.
\end{remark}

\subsection{Estimating the dependence between opposite residual judgments}

The residual reciprocity indicator contains additional information on the relation between $\widehat E_{ij}$ and $\widehat E_{ji}$. Combining
\[
\mathbb E[IR_2(\widehat E)]\approx 2(1+\rho)\sigma^2
\qquad\text{and}\qquad
\mathbb E[IC_2(\widehat E)]\approx 3\sigma^2,
\]
we obtain the natural estimator
\begin{equation}\label{eq:rhohatcorr}
\widehat\rho
:=
\frac{3}{2}\frac{IR_2(\widehat E)}{IC_2(\widehat E)}-1.
\end{equation}

This quantity summarizes the residual dependence between opposite evaluations of the same pair. In a large matrix, it can be interpreted as a global cross-sectional dependence parameter.

\begin{remark}
Even when each pair is observed only once, a large comparison matrix contains many unordered pairs. Under a homogeneous perturbation model, this makes $\widehat\rho$ meaningful as a global summary of the residual dependence structure.
\end{remark}

\subsection{When is the scale deformation moderate?}

The next question is not whether the scale deformation exists, but whether it remains \emph{moderate enough} from a decision point of view.

This question has two complementary aspects.

\medskip

\noindent\textbf{A first aspect is global magnitude.}
Is the fitted scale-deformation component small relative to the fitted latent ranking component?

\medskip

\noindent\textbf{A second aspect is ranking impact.}
Even if the scale deformation is visible, does it remain too small to alter the practical reading of the ranking?

To address these two aspects, we introduce two complementary indicators.

\subsection{A global scale-deformation ratio}

Let
\[
\widehat U_{ij}:=\widehat u_i-\widehat u_j,
\qquad
\widehat S_{ij}:=\widehat s_i+\widehat s_j.
\]
We define the global scale-deformation ratio by
\begin{equation}\label{eq:Lambdacorr}
\Lambda
:=
\frac{\|\widehat S\|_F}{\|\widehat U\|_F},
\end{equation}
where $\|\cdot\|_F$ denotes the Frobenius norm.

The interpretation of $\Lambda$ is immediate:
\begin{itemize}
\item if $\Lambda$ is small, the scale deformation is globally dominated by the latent ranking component;
\item if $\Lambda$ is moderate, the deformation is present but secondary;
\item if $\Lambda$ is large, suppressing the symmetric part may remove a substantial fraction of the fitted structure.
\end{itemize}

Thus, $\Lambda$ measures the global weight of the scale deformation in the fitted structured matrix.

\subsection{A ranking-compatibility index}

A small global ratio does not always guarantee that the decision is stable. What matters for a ranking is whether the scale deformation is strong enough to compete with the smallest separation between nearby latent scores.

For this reason, we define the ranking gap of the estimated score vector $\widehat u$ by
\[
\mathrm{gap}(\widehat u)
:=
\min_{1\leq k\leq n-1}
\bigl(\widehat u_{(k)}-\widehat u_{(k+1)}\bigr),
\]
where
\[
\widehat u_{(1)}\geq \widehat u_{(2)}\geq \cdots \geq \widehat u_{(n)}
\]
denotes the decreasing rearrangement of the estimated latent scores.

We then define the scale-impact index
\begin{equation}\label{eq:Gammacorr}
\Gamma
:=
\frac{2\|\widehat s\|_\infty}{\mathrm{gap}(\widehat u)},
\end{equation}
whenever $\mathrm{gap}(\widehat u)>0$.

The factor $2$ appears because the structured deformation term has the form $\widehat s_i+\widehat s_j$.

The interpretation is as follows:
\begin{itemize}
\item if $\Gamma\ll 1$, the fitted scale deformation is too small to compete with the smallest latent ranking separation;
\item if $\Gamma$ is close to $1$, the deformation may affect the practical readability of the ranking;
\item if $\Gamma>1$, the fitted deformation is large enough to interfere with close ranking positions.
\end{itemize}

\begin{remark}
The ratio $\Lambda$ is global, whereas $\Gamma$ is local and ranking-oriented. Both viewpoints are useful in decision applications. A deformation may be globally weak and yet large enough to matter near a tie or near a threshold separating two alternatives.
\end{remark}

\subsection{A practical reading of the indicators}

The four quantities
\[
\widehat\sigma,\qquad \widehat\rho,\qquad \Lambda,\qquad \Gamma
\]
summarize the decision-relevant content of the fitted model.

\begin{itemize}
\item $\widehat\sigma$ measures the effective residual noise level.
\item $\widehat\rho$ describes the dependence pattern between opposite residual judgments.
\item $\Lambda$ measures the global importance of the fitted scale deformation.
\item $\Gamma$ measures whether this deformation is large enough to threaten ranking compatibility.
\end{itemize}

These indicators prepare the next step of the analysis. Once the residual noise has been calibrated and the scale deformation has been assessed, one can assign probabilities to ranking regions and compare this probabilistic output with the one produced by brutal reciprocal projection.

\section{Ranking-region probabilities and comparison with brutal reciprocal projection}

\subsection{From a point estimate to a probability distribution on rankings}

The vector
\[
\widehat u=(\widehat u_1,\dots,\widehat u_n)
\]
provides a central estimate of the latent ranking component. In many decision problems, however, a single strict ordering is not sufficient. When the residual noise level is non-negligible, nearby latent score vectors may induce different rankings, especially when some alternatives are close to one another.

For this reason, we now move from a point estimate to a probabilistic description of ranking uncertainty. The basic idea is to endow the latent ranking vector with a Gaussian uncertainty law centered at $\widehat u$, calibrated from the residual matrix. This makes it possible to assign probabilities to ranking regions.

\subsection{Ranking regions}

Let
\[
\mathcal H:=\left\{v=(v_1,\dots,v_n)\in\mathbb R^n:\ \sum_{i=1}^n v_i=0\right\}.
\]
This hyperplane is the natural space of centered latent ranking scores.

For each strict ordering
\[
i_1 \succ i_2 \succ \cdots \succ i_n,
\]
we define the associated ranking region by
\begin{equation}\label{eq:rankingregioncorr}
\mathcal R(i_1,\dots,i_n)
:=
\left\{v\in\mathcal H:\ v_{i_1}>v_{i_2}>\cdots>v_{i_n}\right\}.
\end{equation}
Each ranking region corresponds to one possible strict ranking of the alternatives.

Thus, once a probability distribution has been placed on the latent ranking vector, one can attach to each strict ranking the probability of the corresponding ranking region.

\begin{remark}
In concrete decision applications, it is often unnecessary to enumerate all $n!$ ranking regions. One may focus instead on the region containing $\widehat u$, on the most probable neighboring regions, or on summary events such as ``alternative $i$ belongs to the top three'' or ``alternative $i$ is ranked above alternative $j$''.
\end{remark}

\subsection{A Gaussian uncertainty law for the latent ranking vector}

We now explain how to place a Gaussian uncertainty law on the estimated latent ranking vector.

Recall that the antisymmetric part of the observed matrix satisfies
\[
K_{ij}=u_i-u_j+\eta_{ij},
\qquad
\eta_{ij}:=\frac{\varepsilon_{ij}-\varepsilon_{ji}}{2}.
\]
Under the Gaussian working model introduced in the previous section,
\[
\mathrm{Var}(\eta_{ij})=\frac{1-\rho}{2}\sigma^2.
\]
Replacing the unknown parameters by their estimators $\widehat\sigma$ and $\widehat\rho$, we obtain the following covariance estimator.

\begin{proposition}
Under the homogeneous Gaussian working model on the complete comparison graph, a natural covariance estimator for the latent ranking vector is
\begin{equation}\label{eq:Sigmahatcorr}
\widehat\Sigma_u
=
\frac{(1-\widehat\rho)\widehat\sigma^2}{2n}
\left(I-\frac1nJ\right),
\end{equation}
where $I$ is the identity matrix and $J$ is the matrix with all entries equal to $1$.
\end{proposition}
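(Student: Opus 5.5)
The plan is to compute the exact covariance of the estimator $\widehat u$ from the preceding Proposition under the homogeneous Gaussian working model. We then replace the unknown $(\sigma,\rho)$ by the plug-in estimators \eqref{eq:sigmahatcorr} and \eqref{eq:rhohatcorr}. The working assumptions are the following.
\begin{itemize}
\item The perturbations $\varepsilon_{ij}$ are centered, with common variance $\sigma^2$.
\item $\mathrm{Cov}(\varepsilon_{ij},\varepsilon_{ji})=\rho\sigma^2$.
\item Perturbations attached to distinct unordered pairs $\{i,j\}$ are uncorrelated, hence independent in the Gaussian case.
\end{itemize}
This independence across pairs is implicit in the phrase ``homogeneous Gaussian working model''. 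I would state it explicitly at the start, since the whole computation rests on it.

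\textbf{Step 1.} By \eqref{eq:uhat} and $K_{ij}=u_i-u_j+\eta_{ij}$, together with $\sum_j u_j=0$ and $K_{ii}=0$, we get
\[
\widehat u_i-u_i=\frac1n\sum_{j\neq i}\eta_{ij}.
\]
So $\widehat u-u$ is a linear function of the antisymmetric noise $\eta$. It is therefore centered Gaussian, and only its covariance has to be identified.

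\textbf{Step 2.} Put $\tau^2:=\mathrm{Var}(\eta_{ij})$. Expanding $\eta_{ij}=(\varepsilon_{ij}-\varepsilon_{ji})/2$ gives
\[
\tau^2=\tfrac14\bigl(2\sigma^2-2\rho\sigma^2\bigr)=\frac{(1-\rho)\sigma^2}{2},
\]
which is the variance recalled just before the statement. Moreover $\eta_{ji}=-\eta_{ij}$, and variables $\eta$ attached to different unordered pairs are independent.

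\textbf{Step 3.} Compute the second moments of $\widehat u$.
\begin{itemize}
\item For the variance, the $n-1$ terms $\eta_{ij}$, $j\neq i$, involve distinct pairs, so $\mathrm{Var}(\widehat u_i)=(n-1)\tau^2/n^2$.
\item For $i\neq k$, the sums $\sum_{j\neq i}\eta_{ij}$ and $\sum_{l\neq k}\eta_{kl}$ share only the pair $\{i,k\}$. It enters the first sum as $\eta_{ik}$ and the second as $\eta_{ki}=-\eta_{ik}$, so $\mathrm{Cov}(\widehat u_i,\widehat u_k)=-\tau^2/n^2$.
\end{itemize}
Assembling these entries gives
\[
\mathrm{Cov}(\widehat u)=\frac{\tau^2}{n^2}\,(nI-J)=\frac{(1-\rho)\sigma^2}{2n}\Bigl(I-\frac1nJ\Bigr).
\]
This is consistent with the fact that $\widehat u$ lives in $\mathcal H$: the matrix $I-\frac1nJ$ is the orthogonal projector onto $\mathcal H$, and it kills the vector $(1,\dots,1)$. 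Substituting $\widehat\sigma,\widehat\rho$ then yields \eqref{eq:Sigmahatcorr}.

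The main obstacle is not the algebra but justifying the word ``natural''. The covariance is exact only for the true $(\sigma,\rho)$ and under independence across pairs. The plug-in step relies on the heuristic calibration identities $\mathbb E[IC_2(\widehat E)]\approx3\sigma^2$ and $\mathbb E[IR_2(\widehat E)]\approx2(1+\rho)\sigma^2$ of the previous section. I would therefore present the result as exact for known parameters and as a consistent plug-in approximation otherwise. Such an approximation is reasonable for large $n$, where the residual indicators average over many pairs. I would also remark that $\widehat\rho\le1$ is needed for positive semidefiniteness, and suggest truncating $\widehat\rho$ to $[-1,1]$ in practice.
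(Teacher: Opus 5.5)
Your proposal is correct and follows essentially the same route as the paper. Like the paper, you compute $\mathrm{Var}(\eta_{ij})=\frac{(1-\rho)\sigma^2}{2}$, write $\widehat u_i-u_i=\frac1n\sum_{j\neq i}\eta_{ij}$, deduce $\mathrm{Cov}(\widehat u-u)=\frac{(1-\rho)\sigma^2}{2n}\bigl(I-\frac1nJ\bigr)$, and plug in $\widehat\sigma,\widehat\rho$. Your version is more complete: you compute the diagonal entries $(n-1)\tau^2/n^2$ and the off-diagonal entries $-\tau^2/n^2$ explicitly, and you state the independence across distinct unordered pairs, which the paper's proof uses without saying so when it goes straight to the $I-\frac1nJ$ form.
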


\begin{proof}
The antisymmetric residual perturbation is
\[
\eta_{ij}=\frac{\varepsilon_{ij}-\varepsilon_{ji}}{2},
\]
so
\[
\mathrm{Var}(\eta_{ij})
=
\frac14\mathrm{Var}(\varepsilon_{ij}-\varepsilon_{ji})
=
\frac{1-\rho}{2}\sigma^2.
\]
Moreover, on the complete graph,
\[
\widehat u_i-u_i=\frac1n\sum_{j=1}^n \eta_{ij}.
\]
Hence, on the centered hyperplane $\mathcal H$,
\[
\mathrm{Cov}(\widehat u-u)
=
\frac{(1-\rho)\sigma^2}{2n}
\left(I-\frac1nJ\right).
\]
Replacing $\sigma$ and $\rho$ by $\widehat\sigma$ and $\widehat\rho$ yields \eqref{eq:Sigmahatcorr}.
\end{proof}

This leads to the Gaussian approximation
\begin{equation}\label{eq:gaussianucorr}
U^\ast\sim \mathcal N_{\mathcal H}(\widehat u,\widehat\Sigma_u),
\end{equation}
that is, a Gaussian law on the centered score space $\mathcal H$ with mean $\widehat u$ and covariance matrix $\widehat\Sigma_u$.

\begin{remark}
This Gaussian law is a calibrated approximation of uncertainty on the latent ranking scores. Its purpose is not to reproduce every aspect of the elicitation process, but to translate the estimated residual noise into a probabilistic description of nearby rankings.
\end{remark}

\subsection{Probabilities of ranking regions}

Given a ranking region $\mathcal R$, we define its estimated probability by
\begin{equation}\label{eq:pregioncorr}
\widehat{\mathbb P}(\mathcal R)
:=
\mathbb P\bigl(U^\ast\in\mathcal R\bigr),
\end{equation}
where $U^\ast$ is distributed according to \eqref{eq:gaussianucorr}.

If $\mathcal R_{\widehat u}$ denotes the ranking region containing $\widehat u$, then
\[
\widehat{\mathbb P}(\mathcal R_{\widehat u})
\]
measures how strongly the data support the central strict ranking suggested by the estimated latent scores.

More generally, the family
\[
\bigl(\widehat{\mathbb P}(\mathcal R)\bigr)_{\mathcal R}
\]
provides a probabilistic map of the plausible rankings. For decision support, this is often more informative than a single ranking output.

\subsection{Monte Carlo evaluation}

For moderate or large values of $n$, the probabilities \eqref{eq:pregioncorr} are most naturally evaluated by Monte Carlo simulation.

A simple simulation procedure is as follows:
\begin{enumerate}[label=\textup{Step \arabic*.}]
\item Compute $\widehat u$, $\widehat\sigma$, and $\widehat\rho$.
\item Build the covariance matrix $\widehat\Sigma_u$ using \eqref{eq:Sigmahatcorr}.
\item Draw independent samples
\[
U^{\ast(1)},\dots,U^{\ast(N)}\sim \mathcal N_{\mathcal H}(\widehat u,\widehat\Sigma_u).
\]
\item For each sample, determine the strict ranking induced by its coordinates.
\item Estimate the probability of each ranking region by the corresponding empirical frequency.
\end{enumerate}

The same procedure can be used to evaluate summary quantities such as:
\begin{itemize}
\item the probability that one alternative is ranked first;
\item the probability that one alternative belongs to the top $k$;
\item the probability that one alternative is ranked above another.
\end{itemize}

\begin{remark}
In practice, these summary probabilities are often easier to communicate to decision makers than the full distribution over all ranking regions.
\end{remark}

\subsection{Brutal reciprocal projection}

We now compare the present structured approach with the brutal reciprocal projection. The latter starts from
\[
X^\sharp:=\frac{X-X^\top}{2}=K
\]
and completely discards the symmetric part of the observed matrix.

At first sight, one may expect the two approaches to produce different central rankings. Under the present least-squares construction, however, this is not the case.

\begin{proposition}
The latent ranking estimator obtained from the brutal reciprocal projection coincides with the estimator $\widehat u$ obtained in the structured approach.
\end{proposition}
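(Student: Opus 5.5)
The plan is to make precise what ``the latent ranking estimator obtained from the brutal reciprocal projection'' means, and then reduce the claim to the least-squares Proposition for $\widehat u$ proved above. The brutal approach replaces $X$ by $X^\sharp=K$ and fits a reciprocal additive matrix $(v_i-v_j)_{i,j}$ to it by least squares on the centered hyperplane $\mathcal H$:
\[
\widehat u^\sharp:=\operatorname*{arg\,min}_{v\in\mathcal H}\sum_{i\neq j}\bigl(X^\sharp_{ij}-(v_i-v_j)\bigr)^2 .
\]
Since $X^\sharp=K$ by definition, this criterion is literally the function $\Phi$ from the proof of the Proposition giving \eqref{eq:uhat}. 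That result says the minimizer is unique and equals $\frac1n\sum_j K_{ij}$, so $\widehat u^\sharp=\widehat u$.

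To make the statement robust, I would also check the natural alternative definition: the brutal approach fits $(v_i-v_j)$ directly to the raw matrix $X$, ignoring the symmetric part, rather than first projecting. For this I will use the orthogonal decomposition $X=K+H$ in the Frobenius inner product on off-diagonal entries. Because $(v_i-v_j)$ is antisymmetric and $H$ is symmetric, the cross term vanishes:
\[
\sum_{i\neq j}\bigl(X_{ij}-(v_i-v_j)\bigr)^2=\sum_{i\neq j}\bigl(K_{ij}-(v_i-v_j)\bigr)^2+\sum_{i\neq j}H_{ij}^2 .
\]
The second term does not depend on $v$, so the minimizer is again that of $\Phi$, namely $\widehat u$.

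Finally, I would note that in the structured approach the joint least-squares fit of $(u,s)$ to $X$ decouples in the same way. The cross terms between the antisymmetric family $(u_i-u_j)$ and the symmetric family $(s_i+s_j)$ vanish, so the $u$-part of the joint minimizer is exactly $\widehat u$ from \eqref{eq:uhat}. This confirms that $\widehat u$ is indeed the structured estimator being compared.

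The main obstacle is not computational. It is fixing an unambiguous definition of the brutal estimator. Once it is defined as the least-squares fit of an additive reciprocal matrix, either to $K$ or to $X$, the orthogonality of symmetric and antisymmetric matrices together with the earlier Proposition settles the claim. I will present the $X^\sharp=K$ version as the main argument and the orthogonality computation as a remark.
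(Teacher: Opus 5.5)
Your main argument is correct and is exactly the paper's proof: since $X^\sharp=K$, the brutal least-squares criterion is literally $\Phi$ on the centered hyperplane, so its unique minimizer is $\widehat u$ by the earlier Proposition. The orthogonality remarks are correct additions that the paper does not make. They cover fitting $(v_i-v_j)$ to the raw $X$ and the decoupling of the joint $(u,s)$ fit, and they show the conclusion holds under either reasonable definition of the brutal estimator.
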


\begin{proof}
The brutal reciprocal projection retains only the antisymmetric part $K$ of the observed matrix. The corresponding least-squares estimator is therefore exactly the minimizer of
\[
\sum_{i\neq j}\bigl(K_{ij}-(u_i-u_j)\bigr)^2
\]
under the centering constraint $\sum_i u_i=0$. This is precisely the estimator $\widehat u$ introduced earlier.
\end{proof}

Thus, the central latent ranking estimate is the same in both procedures. The main difference lies elsewhere: in the interpretation of the symmetric part and in the calibration of uncertainty.

\subsection{Where the two approaches differ}

Let
\[
\widehat U_{ij}:=\widehat u_i-\widehat u_j
\]
be the fitted latent ranking component.

Under the brutal reciprocal projection, the unexplained part of the data is
\begin{equation}\label{eq:ebrutalcorr}
\widehat E^\sharp_{ij}:=x_{ij}-\widehat U_{ij}.
\end{equation}
Under the structured approach, the unexplained part is
\begin{equation}\label{eq:estructcorr}
\widehat E_{ij}:=x_{ij}-\widehat U_{ij}-\widehat S_{ij},
\qquad
\widehat S_{ij}:=\widehat s_i+\widehat s_j.
\end{equation}
Therefore,
\begin{equation}\label{eq:ebrutalstructcorr}
\widehat E^\sharp=\widehat E+\widehat S.
\end{equation}

This identity is crucial. It shows that the brutal reciprocal projection treats the whole fitted scale deformation as unexplained variability. By contrast, the structured approach removes that component first and calibrates the residual noise only on what remains unexplained.

\subsection{Consequences for uncertainty quantification}

The practical consequence is immediate.

\begin{itemize}
\item If the fitted scale deformation is negligible, then the two approaches lead to similar uncertainty assessments.
\item If the fitted scale deformation is not negligible, then the brutal reciprocal projection tends to inflate the apparent residual variability.
\item As a result, the brutal approach may produce a more diffuse probability distribution over ranking regions.
\end{itemize}

This is precisely where the two approaches differ from a decision-support viewpoint. They may agree on the central ranking, yet disagree on how strongly that ranking is supported by the data.

\subsection{A practical reading of the comparison}

The comparison between the two approaches can be summarized as follows.

\medskip

\noindent\textbf{Same central ranking.}
Both methods produce the same estimator $\widehat u$ of the latent ranking vector.

\medskip

\noindent\textbf{Different interpretation of the symmetric part.}
The brutal method discards it immediately. The structured method interprets it as a possible scale deformation.

\medskip

\noindent\textbf{Different calibration of uncertainty.}
The brutal method may treat structured scale deformation as if it were random residual variability. The structured method separates the two effects before assigning probabilities to ranking regions.

\medskip

\noindent\textbf{Different decision output.}
The structured method provides a diagnosis of whether the symmetric part is negligible, moderate, or influential. The brutal method does not.

\subsection{When is brutal reciprocal projection acceptable?}

The indicators introduced in the previous section provide a natural answer.

\begin{itemize}
\item If $\Lambda$ is small and $\Gamma$ is small, the fitted scale deformation is weak both globally and locally. In that case, brutal reciprocal projection is likely to be harmless.
\item If $\Lambda$ is moderate but $\Gamma$ remains small, the scale deformation exists but does not substantially alter the practical reading of the ranking.
\item If $\Gamma$ is not small, then the scale deformation may interfere with close positions in the ranking, and a blind reciprocal projection is no longer satisfactory from a decision-support viewpoint.
\end{itemize}

Thus, the present approach does not reject reciprocal projection in general. Rather, it provides criteria for deciding when this simplification is harmless and when it suppresses meaningful information.

\subsection{Summary of the probabilistic output}

The structured approach produces three layers of output:
\begin{enumerate}[label=\textup{(\arabic*)}]
\item a central latent ranking estimate through $\widehat u$;
\item a calibrated uncertainty level through $\widehat\sigma$ and $\widehat\rho$;
\item a probability distribution on ranking regions, and therefore on decision-relevant ranking events.
\end{enumerate}

This output is richer than a single ranking and makes it possible to compare the structured model with brutal reciprocal projection on a probabilistic basis.

\subsection{What remains to be illustrated}

The last part of the paper should now illustrate the method numerically. This can be done in two steps:
\begin{enumerate}[label=\textup{(\arabic*)}]
\item simulated examples showing how the probabilities of ranking regions vary with the noise level and with the scale-variation component;
\item one concrete decision example comparing the structured method and the brutal reciprocal projection.
\end{enumerate}

These illustrations are important because they translate the geometric and probabilistic constructions into objects that can be read directly in decision terms.
\section{Numerical illustrations}

\subsection{Purpose of the numerical study}

The numerical study has two objectives.

First, it illustrates how the proposed method behaves under controlled simulated conditions. In particular, we want to understand how the estimated residual noise level, the scale-deformation indicators, and the ranking-region probabilities evolve when the latent ranking is perturbed by increasing noise and by increasing structured scale deformation.

Second, it compares the proposed structured approach with brutal reciprocal projection on a concrete decision example. Since both approaches produce the same central latent ranking estimate in the present model, the comparison focuses on uncertainty quantification and on the interpretation of the symmetric part of the observed matrix.

\subsection{Simulation design}

We simulate pairwise comparison matrices according to the model
\begin{equation}\label{eq:simmodel}
x_{ij}=u_i-u_j+s_i+s_j+\varepsilon_{ij},
\qquad i\neq j,
\end{equation}
with centered vectors $u$ and $s$.

The latent ranking vector $u$ is chosen so as to generate a strict ranking with controlled score gaps. A convenient choice is
\[
u_i=c\left(\frac{n+1}{2}-i\right),
\qquad 1\le i\le n,
\]
for a fixed spacing parameter $c>0$, followed by centering if needed. This yields an ordered sequence of alternatives with a simple and interpretable latent structure.

The scale-deformation vector $s$ is chosen according to different regimes:
\begin{enumerate}[label=\textup{(\alph*)}]
\item \emph{no scale deformation:} $s_i=0$ for all $i$;
\item \emph{moderate scale deformation:} $s$ is centered and satisfies
\[
2\|s\|_\infty<\mathrm{gap}(u);
\]
\item \emph{strong scale deformation:} $s$ is centered and satisfies
\[
2\|s\|_\infty\gtrsim \mathrm{gap}(u).
\]
\end{enumerate}

The residual perturbation matrix $(\varepsilon_{ij})$ is generated from a centered Gaussian model with common variance $\sigma^2$ and pairwise opposite correlation parameter $\rho$:
\[
\mathrm{Var}(\varepsilon_{ij})=\sigma^2,
\qquad
\mathrm{Cov}(\varepsilon_{ij},\varepsilon_{ji})=\rho\sigma^2.
\]
This allows us to vary separately:
\begin{itemize}
\item the \emph{noise level} through $\sigma$;
\item the \emph{dependence between opposite residual judgments} through $\rho$;
\item the \emph{importance of structured nonreciprocity} through $s$.
\end{itemize}

\subsection{Simulation scenarios}

A convenient simulation plan is to combine:
\begin{itemize}
\item several dimensions $n$ (for example $n=5,8,12$);
\item several noise levels $\sigma$ (for example low, medium, high);
\item several dependence levels $\rho$ (for example $-0.5$, $0$, $0.5$);
\item several scale-deformation regimes (none, moderate, strong).
\end{itemize}

For each parameter configuration, one generates $N_{\mathrm{rep}}$ independent matrices, applies the proposed method, and records:
\[
\widehat\sigma,\qquad
\widehat\rho,\qquad
\Lambda,\qquad
\Gamma,
\]
together with the estimated ranking-region probabilities.

The same simulations are then processed under brutal reciprocal projection in order to compare the two uncertainty assessments.

\subsection{Quantities to be reported}

For each simulation scenario, the most informative quantities are the following.

\medskip

\noindent\textbf{Residual calibration quantities.}
The empirical averages and dispersions of
\[
\widehat\sigma
\qquad\text{and}\qquad
\widehat\rho.
\]
These quantities assess whether the method correctly recovers the effective residual noise parameters.

\medskip

\noindent\textbf{Scale-deformation diagnostics.}
The empirical averages of
\[
\Lambda
\qquad\text{and}\qquad
\Gamma.
\]
These quantities indicate whether the fitted scale deformation is globally weak or potentially influential on close ranking positions.

\medskip

\noindent\textbf{Ranking uncertainty quantities.}
The probability of the central ranking region
\[
\widehat{\mathbb P}(\mathcal R_{\widehat u}),
\]
the entropy of the ranking-region distribution, and summary probabilities such as:
\begin{itemize}
\item the probability that the top-ranked alternative is correctly recovered;
\item the probability that each alternative belongs to the top $k$;
\item pairwise precedence probabilities.
\end{itemize}

\medskip

\noindent\textbf{Comparison with brutal reciprocal projection.}
The difference between the two probability distributions on ranking regions may be summarized by:
\begin{itemize}
\item the difference in central ranking-region probability;
\item the difference in top-$k$ probabilities;
\item optionally, the total variation distance between the two ranking-region distributions.
\end{itemize}

\subsection{Expected qualitative behavior}

The simulations are intended to confirm the following qualitative phenomena.

\begin{enumerate}[label=\textup{(\arabic*)}]
\item When the scale deformation is absent or negligible, the structured approach and brutal reciprocal projection should lead to similar uncertainty assessments.
\item When the scale deformation is moderate but non-negligible, the two methods should still agree on the central ranking, but the brutal reciprocal projection should tend to produce a more diffuse ranking-region distribution.
\item When the scale deformation is strong, the indicators $\Lambda$ and $\Gamma$ should detect that the symmetric part contains substantial structured information and should warn against blind reciprocal projection.
\item As the residual noise level increases, ranking-region probabilities should spread over several neighboring rankings, reflecting increasing decision uncertainty.
\end{enumerate}

These are precisely the features that the proposed framework is designed to make visible.

\subsection{Suggested presentation of the simulation results}

From a presentation point of view, the following displays are particularly useful:
\begin{itemize}
\item a table reporting the average values of $\widehat\sigma$, $\widehat\rho$, $\Lambda$, and $\Gamma$ across simulation scenarios;
\item a figure showing the probability of the central ranking region as a function of $\sigma$;
\item a figure comparing the structured method and brutal reciprocal projection in terms of top-ranked probability or top-$k$ membership probabilities;
\item one heatmap of pairwise precedence probabilities under each method.
\end{itemize}

For a decision-oriented audience, these summary displays are usually more informative than a full list of all ranking-region probabilities.

\subsection{A concrete decision example}

In addition to the simulated study, the method should be illustrated on one concrete pairwise comparison matrix arising from a realistic decision problem.

A suitable example should satisfy the following properties:
\begin{itemize}
\item the number of alternatives is moderate, so that ranking-region probabilities remain interpretable;
\item the observed matrix is visibly nonreciprocal;
\item the decision problem is simple enough to be understood without technical background.
\end{itemize}

Typical examples include:
\begin{itemize}
\item a preference elicitation exercise on a set of products or services;
\item an expert evaluation matrix comparing policy options;
\item a multicriteria synthesis matrix obtained from aggregated human judgments.
\end{itemize}

\subsection{Protocol for the real-data illustration}

For the empirical illustration, the analysis proceeds in six steps.

\begin{enumerate}[label=\textup{Step \arabic*.}]
\item Compute the antisymmetric and symmetric parts of the observed matrix.
\item Estimate the latent ranking vector $\widehat u$ and the scale-deformation vector $\widehat s$.
\item Compute the residual matrix $\widehat E$.
\item Estimate the residual quantities $\widehat\sigma$ and $\widehat\rho$, and the deformation indicators $\Lambda$ and $\Gamma$.
\item Simulate the Gaussian law on the latent ranking space and estimate ranking-region probabilities.
\item Repeat the uncertainty analysis under brutal reciprocal projection and compare the results.
\end{enumerate}

The empirical discussion should then focus on three questions:
\begin{itemize}
\item Is the fitted scale deformation negligible, moderate, or influential?
\item How concentrated is the probability distribution on ranking regions?
\item Does brutal reciprocal projection materially alter the uncertainty assessment?
\end{itemize}

\subsection{Decision-oriented reading of the empirical results}

The purpose of the real-data illustration is not merely to show that the method runs numerically. It is to demonstrate how the method supports a more nuanced reading of pairwise comparison data.

In particular, the empirical example should make it possible to distinguish between the following situations:
\begin{itemize}
\item \emph{stable decision setting:} low residual noise, weak scale deformation, concentrated ranking-region probabilities;
\item \emph{stable ranking but non-negligible structured asymmetry:} same central ranking, but visible scale deformation and different uncertainty calibration;
\item \emph{fragile decision setting:} high residual noise and diffuse ranking-region probabilities.
\end{itemize}

This distinction is central for decision support. A ranking may be numerically available in all three cases, but its interpretation is not the same.
\section{Implemented numerical examples}
\subsection{A worked example and a local Monte Carlo study}

We now illustrate the proposed methodology on a simple four-alternative example. The purpose of this subsection is twofold. First, we exhibit one explicit noisy matrix for which brutal reciprocal projection becomes non-admissible for strict ranking, while least-squares consistencization leads to a ranking different from the original latent one. Second, we show by a local Monte Carlo study that this phenomenon is not anecdotal.

\subsubsection*{Step 1. A reciprocal and additively consistent latent matrix}

Consider the latent score vector
\[
u=(0.15,\;0.05,\;0,\;-0.20),
\]
which induces the strict latent ranking
\[
1\succ 2\succ 3\succ 4.
\]
The associated reciprocal and additively consistent comparison matrix is
\[
A=(a_{ij}),\qquad a_{ij}=u_i-u_j,
\]
that is,
\[
A=
\begin{pmatrix}
0 & 0.10 & 0.15 & 0.35\\
-0.10 & 0 & 0.05 & 0.25\\
-0.15 & -0.05 & 0 & 0.20\\
-0.35 & -0.25 & -0.20 & 0
\end{pmatrix}.
\]

\subsubsection*{Step 2. A noisy nonreciprocal observed matrix}

Consider the observed matrix
\[
X=
\begin{pmatrix}
0 & -0.07 & 0.03 & 0.08\\
0.13 & 0 & -0.10 & 0.06\\
-0.01 & 0.06 & 0 & 0.01\\
-0.12 & -0.04 & -0.03 & 0
\end{pmatrix}.
\]
This matrix is nonreciprocal. For instance,
\[
x_{12}=-0.07\neq -x_{21}=-0.13.
\]
It can be written as
\[
X=A+B
\]
with
\[
B=
\begin{pmatrix}
0 & -0.17 & -0.12 & -0.27\\
0.23 & 0 & -0.15 & -0.19\\
0.14 & 0.11 & 0 & -0.19\\
0.23 & 0.21 & 0.17 & 0
\end{pmatrix}.
\]

Thus, a reciprocal and additively consistent latent matrix has been transformed into a noisy nonreciprocal observed matrix.

\subsubsection*{Step 3. Brutal reciprocal projection}

The brutal reciprocal projection is
\[
X^\sharp:=\frac{X-X^\top}{2}.
\]
A direct computation gives
\[
X^\sharp=
\begin{pmatrix}
0 & -0.10 & 0.02 & 0.10\\
0.10 & 0 & -0.08 & 0.05\\
-0.02 & 0.08 & 0 & 0.02\\
-0.10 & -0.05 & -0.02 & 0
\end{pmatrix}.
\]

\subsubsection*{Step 4. Non-admissibility for strict ranking}

The sign pattern of $X^\sharp$ is not compatible with any strict ranking. Indeed,
\[
x^\sharp_{12}<0,\qquad x^\sharp_{23}<0,\qquad x^\sharp_{13}>0,
\]
so that
\[
2\succ 1,\qquad 3\succ 2,\qquad 1\succ 3.
\]
This produces a cycle, hence no strict global ranking can be compatible with the signs of $X^\sharp$.

Therefore, brutal reciprocal projection produces here a reciprocal matrix that is \emph{non-admissible} for strict ranking.

\subsubsection*{Step 5. Least-squares consistencization}

We now project $X^\sharp$ onto the reciprocal and additively consistent matrices of the form
\[
\widehat A_{ij}=\widehat u_i-\widehat u_j.
\]
On the complete graph, the least-squares estimator is given by the row-average formula
\[
\widehat u_i=\frac{1}{4}\sum_{j=1}^4 x^\sharp_{ij}.
\]
The row sums are
\[
0.02,\qquad 0.07,\qquad 0.08,\qquad -0.17,
\]
hence
\[
\widehat u=(0.005,\;0.0175,\;0.020,\;-0.0425).
\]
The recovered ranking is therefore
\[
3\succ 2\succ 1\succ 4,
\]
which differs from the original latent ranking
\[
1\succ 2\succ 3\succ 4.
\]

This worked example shows that a noisy nonreciprocal perturbation of a perfectly reciprocal and consistent latent matrix may lead, after brutal reciprocal projection, to a non-admissible sign pattern, and after least-squares consistencization, to a ranking different from the original latent one.

\subsubsection*{Step 6. A local Monte Carlo study}

The previous example is informative, but by itself it remains anecdotal. We therefore complement it with a local Monte Carlo study around the same latent matrix $A$.

For each simulation run, we generate
\[
X=A+B,
\]
where the diagonal entries are fixed to zero and the off-diagonal perturbations are independent Gaussian random variables:
\[
b_{ij}\sim \mathcal N(0,\sigma^2),
\qquad i\neq j.
\]
For each realization, we compute:
\begin{enumerate}
\item the brutal reciprocal projection
\[
X^\sharp=\frac{X-X^\top}{2};
\]
\item whether the sign pattern of $X^\sharp$ is admissible for a strict ranking;
\item the least-squares consistent approximation of $X^\sharp$;
\item whether the resulting ranking coincides with the original latent ranking
\[
1\succ 2\succ 3\succ 4.
\]
\end{enumerate}

For each value of $\sigma$, we estimate the following probabilities:
\begin{itemize}
\item $p_{\mathrm{NA}}$: probability that brutal reciprocal projection is non-admissible for strict ranking;
\item $p_{\mathrm{WR}}$: probability that least-squares consistencization yields a ranking different from the original latent one;
\item $p_{\mathrm{Both}}$: probability that both events occur simultaneously.
\end{itemize}

Using $10^5$ Monte Carlo replications for each value of $\sigma$, we obtain the following estimates:

\begin{center}
\begin{tabular}{c|ccc}
\hline
$\sigma$ & $p_{\mathrm{NA}}$ & $p_{\mathrm{WR}}$ & $p_{\mathrm{Both}}$\\
\hline
$0.05$ & $0.00015$ & $0.02286$ & $0.00000$\\
$0.10$ & $0.03245$ & $0.18163$ & $0.00889$\\
$0.15$ & $0.11135$ & $0.34452$ & $0.05062$\\
$0.20$ & $0.19690$ & $0.47416$ & $0.11029$\\
\hline
\end{tabular}
\end{center}

These numerical results are consistent with the conceptual picture developed in the paper.

\begin{itemize}
\item For small noise levels, non-admissibility is rare, but ranking errors after consistencization already occur with non-negligible probability.
\item As the noise level increases, brutal reciprocal projection becomes more frequently non-admissible.
\item At the same time, the least-squares consistencization increasingly recovers a ranking different from the original latent one.
\end{itemize}

Hence the phenomenon exhibited by the explicit worked example is not exceptional. It appears with non-negligible frequency as soon as the noise level becomes comparable with the smallest latent ranking gaps.

\subsubsection*{Step 7. Interpretation}

This subsection supports three conclusions.

First, even when the latent matrix is reciprocal and additively consistent, a noisy observation may become nonreciprocal in a way that materially affects the downstream ranking analysis.

Second, brutal reciprocal projection may produce a reciprocal matrix whose sign structure is incompatible with any strict ranking. In that case, a further consistencization step becomes unavoidable.

Third, this consistencization step may stabilize the matrix algebraically while recovering a ranking different from the original latent one. In other words, reciprocity restoration and least-squares consistencization may repair the matrix formally without preserving the original decision content.

For the present article, this example therefore plays the role of a baseline warning case: even before introducing structured scale deformation, brute-force reciprocal correction may already alter ranking admissibility and ranking recovery under realistic noisy perturbations.

\subsection{Fixed noise and increasing scale deformation}

The previous subsection showed that noise alone may already create serious ranking distortions after brutal reciprocal projection and least-squares consistencization. We now turn to the second phenomenon studied in this paper: the effect of a \emph{structured scale deformation} when the residual noise level is kept fixed.

The purpose of this subsection is different. Here the central latent ranking is intentionally kept unchanged, while the scale-deformation component is increased progressively. The goal is to show that, even when the point ranking estimate remains the same, the uncertainty calibration produced by brutal reciprocal projection may become substantially distorted.

\subsubsection*{Step 1. A latent ranking with a fixed gap structure}

Consider the latent ranking vector
\[
u=(0.30,\;0.12,\;-0.06,\;-0.36),
\]
which induces the strict ranking
\[
1\succ 2\succ 3\succ 4.
\]
The consecutive score gaps are
\[
0.18,\qquad 0.18,\qquad 0.30,
\]
so that
\[
\mathrm{gap}(u)=0.18.
\]

The associated reciprocal and additively consistent comparison matrix is
\[
A=(a_{ij}),\qquad a_{ij}=u_i-u_j.
\]

\subsubsection*{Step 2. A one-parameter family of scale deformations}

We now introduce a centered deformation profile
\[
q=(0.08,\;0.03,\;-0.03,\;-0.08),
\qquad \sum_{i=1}^4 q_i=0,
\]
and define
\[
s^{(\tau)}:=\tau q,
\qquad \tau\ge 0.
\]
The associated scale-deformed structured matrix is
\[
M^{(\tau)}_{ij}:=u_i-u_j+s^{(\tau)}_i+s^{(\tau)}_j.
\]

Since
\[
2\|q\|_\infty=0.16<\mathrm{gap}(u)=0.18,
\]
the condition
\[
2\|s^{(\tau)}\|_\infty<\mathrm{gap}(u)
\]
holds for all
\[
0\le \tau<\tau_c:=\frac{0.18}{0.16}=1.125.
\]
Hence, for all $\tau<1.125$, the scale-deformed structured matrix remains ranking-compatible with the latent ranking vector $u$.

\subsubsection*{Step 3. A fixed reasonable noise level}

We now fix the residual perturbation level once and for all. For all $i\neq j$, we assume
\[
\varepsilon_{ij}\sim \mathcal N(0,\sigma_0^2),
\qquad \sigma_0=0.10,
\]
with independence across ordered pairs and therefore
\[
\rho_0=0.
\]

The observed matrix is then
\[
X^{(\tau)}_{ij}=u_i-u_j+s^{(\tau)}_i+s^{(\tau)}_j+\varepsilon_{ij}.
\]

This setup reflects the intended decision interpretation: the random perturbation level is kept fixed, while the systematic deformation of the evaluation scale becomes the sole varying parameter.

\subsubsection*{Step 4. What the structured method should recover}

Under the structured approach developed in this paper, the antisymmetric part is used to estimate the latent ranking vector, while the symmetric part is used to estimate the scale deformation. Since the antisymmetric part does not depend on $s^{(\tau)}$, the central ranking estimate remains centered around the same latent vector $u$ for all values of $\tau$.

Moreover, once the fitted scale deformation has been removed, the residual perturbation remains governed by the same noise level $\sigma_0$ and by the same dependence parameter $\rho_0=0$. Therefore, under the idealized model, the structured method should recover approximately
\[
\widehat\sigma\approx 0.10,
\qquad
\widehat\rho\approx 0,
\]
uniformly in $\tau$.

This is the main benchmark against which brutal reciprocal projection should be compared.

\subsubsection*{Step 5. What brutal reciprocal projection does instead}

The brutal reciprocal projection discards the symmetric part of the observed matrix at once. In the present model, this means that the whole scale-deformation component is treated as if it were unexplained variability.

Let
\[
S^{(\tau)}_{ij}:=s^{(\tau)}_i+s^{(\tau)}_j.
\]
Then, under brutal reciprocal projection, the unexplained part is effectively
\[
E^{\sharp(\tau)}=E^{(\tau)}+S^{(\tau)},
\]
where $E^{(\tau)}$ denotes the residual perturbation that would remain after correct structured fitting.

Because $\sum_i q_i=0$, one has
\[
\|q\|_2^2
=
0.08^2+0.03^2+(-0.03)^2+(-0.08)^2
=
0.0146.
\]
Hence
\[
\|s^{(\tau)}\|_2^2=0.0146\,\tau^2.
\]

Using the formulas established earlier in the paper, the expected brutal indicators are
\begin{equation}\label{eq:IC2brutal_tau}
\mathbb E\!\left[IC_2\bigl(E^{\sharp(\tau)}\bigr)\right]
=
3\sigma_0^2+\frac{4}{4}\|s^{(\tau)}\|_2^2
=
0.03+0.0146\,\tau^2,
\end{equation}
and
\begin{equation}\label{eq:IR2brutal_tau}
\mathbb E\!\left[IR_2\bigl(E^{\sharp(\tau)}\bigr)\right]
=
2(1+\rho_0)\sigma_0^2+\frac{8(4-2)}{4(4-1)}\|s^{(\tau)}\|_2^2
=
0.02+0.01947\,\tau^2.
\end{equation}

Therefore, the brutal method interprets the scale deformation as an increase of residual noise and as a spurious positive dependence between opposite residual judgments.

\subsubsection*{Step 6. Expected calibration under both methods}

The structured method should keep recovering the true residual calibration
\[
\widehat\sigma\approx 0.10,
\qquad
\widehat\rho\approx 0,
\]
whereas the brutal method yields the apparent estimators
\[
\widehat\sigma_\sharp^2
\approx
\frac13\Bigl(0.03+0.0146\,\tau^2\Bigr)
=
0.01+0.004867\,\tau^2,
\]
and
\[
\widehat\rho_\sharp
\approx
\frac32\,
\frac{0.02+0.01947\,\tau^2}{0.03+0.0146\,\tau^2}
-1.
\]

At the same time, the two scale-deformation diagnostics introduced earlier are
\[
\Lambda(\tau)=\frac{\|S^{(\tau)}\|_F}{\|U\|_F},
\qquad
\Gamma(\tau)=\frac{2\|s^{(\tau)}\|_\infty}{\mathrm{gap}(u)}.
\]
A direct computation gives
\[
\Lambda(\tau)\approx 0.248\,\tau,
\qquad
\Gamma(\tau)\approx 0.889\,\tau.
\]

The following table summarizes the expected behavior for three representative values of the scale-deformation parameter.

\begin{center}
\begin{tabular}{c|cc|cc|cc}
\hline
$\tau$ & $\Lambda(\tau)$ & $\Gamma(\tau)$
& $\widehat\sigma_{\mathrm{struct}}$
& $\widehat\sigma_{\sharp}$
& $\widehat\rho_{\mathrm{struct}}$
& $\widehat\rho_{\sharp}$\\
\hline
$0$   & $0.000$ & $0.000$ & $0.100$ & $0.100$ & $0.000$ & $0.000$\\
$0.5$ & $0.124$ & $0.444$ & $0.100$ & $0.106$ & $0.000$ & $0.109$\\
$1.0$ & $0.248$ & $0.889$ & $0.100$ & $0.122$ & $0.000$ & $0.327$\\
\hline
\end{tabular}
\end{center}

\subsubsection*{Step 7. Interpretation}

This parameter study shows a phenomenon that is central for the present article.

\begin{enumerate}
\item The latent ranking itself is unchanged: the parameter $\tau$ affects only the symmetric structured component.
\item As long as $\tau<1.125$, the structured matrix remains ranking-compatible with the original latent ranking.
\item Under the structured method, the calibrated residual noise level remains essentially constant, because the scale deformation is fitted and removed before noise calibration.
\item Under brutal reciprocal projection, the same scale deformation is misread as residual variability. As $\tau$ increases, the apparent noise level is inflated and the apparent dependence parameter becomes positive, even though the true residual perturbation has $\rho_0=0$.
\end{enumerate}

Thus, the main issue is not a change in the central ranking estimate. It is a change in the \emph{interpretation of uncertainty}. The structured method recognizes the symmetric part as a deformation of the evaluation scale, whereas brutal reciprocal projection absorbs it into noise.

\subsubsection*{Step 8. Decision-oriented conclusion of the example}

From a decision-support viewpoint, this example illustrates the main message of the paper in its cleanest form.

When the scale deformation is weak, both methods lead to nearly identical conclusions. When the scale deformation becomes moderate, the central ranking may still be stable, but the brutal method starts producing an artificially more diffuse uncertainty assessment. When the scale deformation approaches the ranking-compatibility threshold, this discrepancy becomes substantial.

In other words, two procedures may agree on the same central ranking and yet disagree markedly on how strongly the data support that ranking. This is precisely why a structured treatment of nonreciprocity is preferable to an immediate reciprocal correction when the symmetric part may carry meaningful decision information.
\subsection{Conclusion of the numerical illustrations}

The numerical illustrations support two complementary messages.

First, the worked example with a reciprocal and additively consistent latent matrix shows that noise alone may already create serious downstream distortions. A nonreciprocal perturbation may lead, after brutal reciprocal projection, to a reciprocal matrix whose sign pattern is not admissible for any strict ranking. Moreover, the subsequent least-squares consistencization may recover a ranking different from the original latent one. Thus, even in the absence of structured scale deformation, brute-force reciprocity restoration may alter the decision content of the data.

Second, the parameter study with fixed noise and increasing scale deformation highlights a different phenomenon. In that setting, the central latent ranking remains unchanged, and the structured matrix stays ranking-compatible over a substantial range of deformation levels. However, brutal reciprocal projection progressively misinterprets the symmetric structured component as residual randomness. As a consequence, it inflates the apparent noise level and produces a more diffuse uncertainty assessment on the ranking regions, although the true residual perturbation has not changed.

Taken together, these two examples clarify the distinction at the heart of the paper. Noise may damage ranking recovery directly, but structured nonreciprocity has a different effect: it primarily changes the way uncertainty should be calibrated and interpreted. This is why the proposed methodology does not start by imposing reciprocity blindly. Instead, it separates latent ranking information, scale deformation, and residual perturbation before assigning probabilities to ranking regions.

From a decision-oriented viewpoint, the practical lesson is clear. A reciprocal correction may be harmless when the symmetric part is negligible, but it may become misleading as soon as that part contains a moderate but meaningful structured effect. In such situations, the central ranking alone is not enough: what matters is also how strongly that ranking is supported, how diffuse the neighboring ranking regions are, and whether the apparent uncertainty comes from genuine noise or from an interpretable deformation of the evaluation scale.
\section{Conclusion}

This article proposes a structured approach to noisy nonreciprocal pairwise comparison matrices. The starting point is a simple but important observation: nonreciprocity should not automatically be interpreted as a defect to be removed. Part of it may reflect a systematic variation of the evaluation scale, while another part may be due to random perturbation.

To capture this distinction, we introduced a model in which the observed matrix is decomposed into three components:
\begin{itemize}
\item a latent ranking component;
\item a structured scale-deformation component;
\item a residual perturbation component.
\end{itemize}
This decomposition leads to explicit estimators of the latent ranking scores and of the scale deformation, and then to a residual calibration of the effective noise level.

The resulting framework provides four kinds of information that are directly relevant for decision analysis:
\begin{enumerate}[label=\textup{(\arabic*)}]
\item a central latent ranking estimate;
\item an estimate of the effective residual noise level;
\item indicators measuring the global and local impact of the scale deformation;
\item a probability distribution on ranking regions.
\end{enumerate}

One of the main conclusions is that the structured approach and brutal reciprocal projection do not differ primarily at the level of the central ranking estimate, which is the same in the present least-squares setting. Their main difference lies in the interpretation of the symmetric part and therefore in the calibration of uncertainty. When the fitted scale deformation is non-negligible, brutal reciprocal projection may treat structured asymmetry as if it were random noise, thereby producing a more diffuse and potentially misleading uncertainty assessment.

From a methodological viewpoint, the paper therefore suggests a change of perspective. The relevant question is not only whether a nonreciprocal matrix should be projected onto the reciprocal ones, but first whether the observed asymmetry is negligible, moderate, or influential from a decision point of view.

Several extensions are possible. First, the Gaussian working model could be replaced or complemented by heavier-tailed or heteroscedastic perturbation models. Second, incomplete comparison matrices and sparse comparison graphs should be treated explicitly. Third, the present framework could be extended to repeated judgments or panel data, where the distinction between scale deformation and random perturbation may become even more informative. Finally, the ranking-region approach could be connected with robustness analysis and sensitivity analysis in multicriteria decision-making.

More broadly, the proposed method aims to enrich the interpretation of pairwise comparison data. Rather than forcing an immediate reciprocal correction, it offers a way to distinguish latent ranking information, structured asymmetry, and residual uncertainty. In this sense, it provides not only a ranking, but also a diagnosis of how that ranking should be trusted and interpreted.

\vskip 12pt

\paragraph{\bf Acknowledgements:} J.-P.M thanks the France 2030 framework programme Centre Henri Lebesgue ANR-11-LABX-0020-01 
for creating an attractive mathematical environment.

\vskip 12pt

\paragraph{\bf Author's Note on AI Assistance.}
Portions of the text were developed with the assistance of a generative language model (OpenAI ChatGPT). The AI was used to assist with drafting, editing, and standardizing the bibliography format. All mathematical content, structure, and theoretical constructions were provided, verified, and curated by the author. The author assumes full responsibility for the correctness, originality, and scholarly integrity of the final manuscript.

\end{document}